\crefname{hypothesis}{Hypothesis}{Hypotheses}
\crefname{assumption}{Assumption}{Assumptions}
\title{On the Omnipresence of Spurious Local Minima in Certain 
Neural Network Training Problems}
\author{Constantin Christof\thanks{Technical University of Munich,
Chair of Optimal Control,
Center for Mathematical Sciences, 
Boltzmannstraße 3,
85748 Garching, Germany, \email{christof@ma.tum.de}, \email{julia.kowalczyk@ma.tum.de}
}
\and
Julia Kowalczyk\footnotemark[1]
}
\newcommand{\sgn}{\operatorname{sgn}}
\newcommand{\closure}{\operatorname{cl}}
\newcommand{\dd}{\hspace{0.03cm}\mathrm{d}}
\renewcommand{\span}{\operatorname{span}}
\newcommand{\diam}{\operatorname{diam}}
\newcommand{\BB}{\mathcal{B}}
\newcommand{\FF}{\mathcal{F}}
\newcommand{\HH}{\mathcal{H}}
\newcommand{\LL}{\mathcal{L}}
\newcommand{\MM}{\mathcal{M}}
\newcommand{\R}{\mathbb{R}}
\DeclareMathOperator*{\argmin}{arg\,min}
\begin{document}

\maketitle

\begin{abstract}
We study the loss landscape of training problems 
for deep artificial neural networks with a one-dimensional real output whose activation functions 
contain an affine segment and whose hidden layers have width
at least two. It is shown that such problems 
possess a continuum of spurious (i.e., not globally optimal) 
local minima for all target functions that are not affine.
In contrast to previous works, our analysis covers all sampling and parameterization regimes,
general differentiable loss functions, arbitrary continuous nonpolynomial activation functions, 
and both the finite- and infinite-dimensional setting.
It is further shown that the appearance of the spurious local minima in the considered training problems is a 
direct consequence of  the universal approximation theorem and that the underlying mechanisms
also cause, e.g., $L^p$-best approximation problems to be ill-posed in the sense of Hadamard
for all networks that do not have a dense image.
The latter result also holds without the assumption of local affine linearity
and without any conditions on the hidden layers. 
\end{abstract}

\begin{keywords}
deep artificial neural network, spurious local minimum, training problem, loss landscape,
Hadamard well-posedness, best approximation, stability analysis, local affine linearity
\end{keywords}

\begin{AMS}
68T07, 49K40, 52A30, 90C31
\end{AMS}

\section{Introduction}
\label{sec:1}

Due to its importance for the understanding of 
the behavior, performance, and limitations of machine learning algorithms, 
the study of the loss landscape of training problems for artificial neural networks 
has received considerable attention in the last years.
Compare, for instance, with the early works \cite{Auer1996,Blum1992,Sima2002} on this topic, 
with the contributions on stationary points and plateau
phenomena in \cite{Ainsworth2021,Cheridito2021-2,Cooper2020,Dauphin2014,Yoshida2019}, 
with the results on suboptimal local minima and valleys in \cite{Arjevani2019SymII,Christof2021,Ding2020,He2020,Nguyen2018,Petzka2021,Venturi2019,Yun2018},
and with the overview articles \cite{Berner2021,Sun2019,Sun2020}. 
For fully connected feedforward neural networks
involving activation functions with an affine segment, 
much of the research on landscape properties was initially 
motivated by the observation of Kawaguchi \cite{Kawaguchi2016} that 
networks with linear activation functions give rise to learning problems 
that do not possess spurious (i.e., not globally optimal) local minima and thus 
behave -- at least as far as the notion of local optimality is concerned --
like convex problems. 
For related work on this topic
and generalizations of the results of \cite{Kawaguchi2016},
see also \cite{Eftekhari2020,Laurent2018,Saxe2013,Yun2018,Zou2020}. 
Based on the findings of \cite{Kawaguchi2016},
it was conjectured that ``nice'' landscape properties or even the 
complete absence of spurious local minima can also be established 
for nonlinear activation functions in many situations and that this behavior is one of the 
main reasons for the performance that machine learning algorithms achieve in 
practice, cf.\ \cite{Eftekhari2020,Saxe2013,Yu1995}. It was quickly realized, however, that,
in the nonlinear case, the situation is more complicated
and that examples of training problems with spurious local minima 
can readily be constructed even when only ``mild'' nonlinearities are present 
or the activation functions are piecewise affine. 
Data sets illustrating this 
for certain activation functions can be found, for example, in \cite{Safran2018,Swirszcz2016,Yun2018}.
On the analytical level, one of the first 
general negative results on the landscape properties of training problems 
for neural networks 
was proven by Yun et al.\ in \cite[Theorem 1]{Yun2018}. 
They showed that spurious local minima 
are indeed \emph{always} present when
a finite-dimensional squared loss training problem for a 
one-hidden-layer neural network with a one-dimensional real output, a hidden layer of width at least two,
and a leaky ReLU-type activation function
is considered and the training data cannot be precisely fit with an affine function. 
This existence result was later also generalized in 
\cite[Theorem 1]{He2020} and \cite[Theorem 1]{Liu2021} to finite-dimensional training problems 
with arbitrary loss for deep networks with piecewise affine activation functions, 
in \cite[Corollary~1]{Ding2020} to finite-dimensional squared loss problems
for deep networks with locally affine activations under the assumption of realizability, 
and in \cite[Corollary 47]{Christof2021} to finite-dimensional squared loss problems 
for deep networks involving many commonly used activation functions. 
For contributions on spurious minima in the absence of local affine linearity,  
see \cite{Christof2021,Ding2020,Petzka2021,Swirszcz2016,Yun2018}.

The purpose of the present paper is to prove
that the results of \cite{Yun2018} on the existence of spurious local minima 
in training problems for neural networks 
with piecewise affine activation functions are also true in a far more general setting
and that the various assumptions on the activations,
the loss function, 
the network architecture, and the realizability of the data in \cite{Christof2021,Ding2020,He2020,Yun2018}
can be significantly relaxed.  
More precisely, we show that \cite[Theorem 1]{Yun2018}
can be straightforwardly extended to networks of arbitrary depth,
to arbitrary continuous nonpolynomial activation functions with an affine segment,
to all (sensible) loss functions, and to infinite dimensions. 
We moreover establish that there is 
a whole continuum of spurious local minima in the situation of
\cite[Theorem 1]{Yun2018} whose Hausdorff dimension can be estimated from below. 
For the main results of our analysis, we refer the reader to \cref{th:spuriousminima-nonconstant,th:spuriousminima-constant}. 
Note that these theorems in particular imply that the observations made in \cite{He2020,Liu2021,Yun2018} 
are not a consequence of the piecewise affine linearity of the activation functions considered in
these papers but of general effects
that apply to all nonpolynomial continuous activation functions 
with an affine segment (SQNL, PLU, ReLU, leaky/parametric ReLU, ISRLU, ELU, etc.),
that network training without spurious local minima 
is impossible (except for the pathological situation of affine linear training data)
when the simple affine structure of \cite{Kawaguchi2016} is kept locally 
but a global nonlinearity is introduced to enhance the approximation capabilities of the network, 
and that there always exist choices of hyperparameters 
such that gradient-based solution algorithms terminate with a suboptimal point
when applied to training problems of the considered type. 

We would like to point out that establishing the 
existence of local minima in training problems for neural networks 
whose activation functions possess an affine segment is
not the main difficulty in the context of 
\cref{th:spuriousminima-nonconstant,th:spuriousminima-constant}. 
To see that such minima are present, it suffices to exploit 
that neural networks with locally affine activations 
can emulate linear neural networks, see \cref{lem:aff1,lem:aff2},
and this construction has already been used in various papers on the landscape properties 
of training problems, e.g., \cite{Christof2021,Ding2020,Goldblum2020Truth,He2020,Yun2018}.
What is typically considered difficult in the literature is proving that 
the local minima obtained from the affine linear segments of the activation functions are 
indeed \emph{always} spurious -- independently of the precise 
form of the activations, the loss function,
the training data, and the network architecture. 
Compare, for instance, with the comments in 
\cite[section~2.2]{Yun2018},
\cite[section~2.1]{Christof2021}, and 
\cite[section~3.3]{He2020} on this topic.
In existing works on the loss surface of neural networks, the 
problem of rigorously proving the spuriousness of local minima 
is usually addressed by manually constructing network parameters 
that yield smaller values of the loss function, cf.\ the proofs of \cite[Theorem~1]{Yun2018}, 
\cite[Theorem 1]{Liu2021}, and 
\cite[Theorem~1]{He2020}. 
Such constructions ``by hand'' are naturally only possible when simple activation functions 
and network architectures 
are considered and not suitable to 
obtain general results. One of the main points that we would like to communicate with this paper is that the
spuriousness of the local minima in 
\cite[Theorem 1]{Yun2018}, 
\cite[Theorem 1]{He2020},
\cite[Corollary~47]{Christof2021},
\cite[Theorem 1]{Liu2021},
and 
\cite[Corollary~1]{Ding2020}
and also our more general \cref{th:spuriousminima-nonconstant,th:spuriousminima-constant}
is, in fact, a straightforward consequence of the universal approximation theorem
in the arbitrary width formulation as proven by Cybenko, Hornik, and Pinkus in \cite{Cybenko1989,Hornik1991,Pinkus1999}, 
or, more precisely, the fact that the universal approximation theorem implies that 
the image of a neural network with a fixed architecture does not possess 
any supporting half-spaces in function space; see \cref{th:nohalfspaces}.
By exploiting this observation, we can easily overcome 
the assumption of \cite{He2020,Liu2021,Yun2018} that the activation functions are piecewise affine linear,
the restriction to the one-hidden-layer case in \cite{Yun2018}, 
the restriction to the squared loss function in \cite{Christof2021,Ding2020,Yun2018}, 
and the assumption of realizability in \cite{Ding2020} and are moreover able to extend the 
results of these papers to infinite dimensions. 

Due to their connection to the universal approximation theorem,
the proofs of \cref{th:spuriousminima-nonconstant,th:spuriousminima-constant} also highlight
the direct relationship that exists between the approximation capabilities 
of neural networks and the optimization landscape and well-posedness properties of 
the training problems that have to be solved in order to determine a
neural network best approximation. 
For further results on this topic, we refer to \cite{Cohen2021} and \cite[section 6]{Pinkus1999}, 
where it is discussed that every approximation instrument that asymptotically 
achieves a certain rate of convergence for the approximation error in terms of its number of 
degrees of freedom
necessarily gives rise to numerical algorithms that are unstable.  
In a spirit similar to that of \cite{Cohen2021}, we show in \cref{sec:5} that the nonexistence of 
supporting half-spaces exploited in the proofs of 
\cref{th:spuriousminima-nonconstant,th:spuriousminima-constant}
also  immediately implies that 
best approximation problems for neural networks 
posed in strictly convex Banach spaces with strictly convex duals 
are always ill-posed in the sense of Hadamard when the considered network 
does not have a dense image. 
Note that this result holds regardless of whether the activation functions
possess an affine segment or not and without any assumptions on the widths of the hidden 
layers. We remark that, for one-hidden-layer networks, 
the corollaries in \cref{sec:5} have essentially already been proven in \cite{Kainen1999,Kainen2001},
see also \cite{Petersen2021}.
Our analysis extends the considerations of \cite{Kainen1999,Kainen2001} to arbitrary depths. 

We conclude this introduction with an overview of the content and the structure of the 
remainder of the paper:

\Cref{sec:2} is concerned with preliminaries. 
Here, we introduce the notation, the functional analytic setting, and the
standing assumptions that we use in this work.  
In \cref{sec:3}, we present our main results on the existence 
of spurious local minima, see \cref{th:spuriousminima-nonconstant,th:spuriousminima-constant}. This section also 
discusses the scope and possible extensions of our analysis and demonstrates that 
\cref{th:spuriousminima-nonconstant,th:spuriousminima-constant}
cover the squared loss problem studied in \cite[Theorem 1]{Yun2018} 
as a special case.
\Cref{sec:4} contains the proofs of \cref{th:spuriousminima-nonconstant,th:spuriousminima-constant}.
In this section, we establish that the 
universal approximation theorem
indeed implies that the image of a neural network in function space 
does not possess any supporting half-spaces and show that this property
allows us to prove the spuriousness of local minima in a natural way.
In \cref{sec:5}, we discuss further implications of the geometric properties 
of the images of neural networks exploited in \cref{sec:4}. 
This section contains the already mentioned results on the Hadamard ill-posedness
of neural network best approximation problems posed in
strictly convex Banach spaces with strictly convex duals. 
Note that tangible examples of such spaces are $L^p$-spaces with 
$1 < p < \infty$, see \cref{cor:LPillposed}. The paper concludes with 
additional comments on
the results derived in \cref{sec:3,sec:4,sec:5} and remarks on open problems.

\section{Notation, preliminaries, and basic assumptions}
\label{sec:2}
Throughout this work, $K \subset \R^d$, $d \in \mathbb{N}$, denotes 
a nonempty compact subset of the Euclidean space $\R^d$.
We endow $K$ with the subspace topology $\tau_K$ induced by the standard topology on $(\R^d, |\cdot|)$,
where \mbox{$|\cdot|$} denotes the Euclidean norm, and denote the associated 
Borel sigma-algebra on $K$ with $\BB(K)$. 
The space of  continuous functions $v\colon K \to \R$ equipped with the maximum norm 
$\|v\|_{C(K)} := \max \{|v(x)| \colon x \in K\}$
is denoted  by $C(K)$. As usual, we identify  the topological
dual space $C(K)^*$ of $(C(K), \|\cdot \|_{C(K)})$ with the space 
$\MM(K)$ of signed Radon measures on $(K, \BB(K))$ endowed with the 
total variation norm $\| \cdot \|_{\MM(K)}$, see 
\cite[Corollary 7.18]{Folland1999}. The corresponding dual pairing is denoted by 
\mbox{$\langle \cdot, \cdot \rangle_{C(K)}\colon \MM(K) \times C(K) \to \R$.}
For the closed cone of nonnegative measures in $\MM(K)$,
we use the notation $\MM_+(K)$.  
The standard, real Lebesgue spaces associated with a measure space
$(K, \BB(K), \mu)$, $\mu \in \MM_+(K)$,
are denoted by $L^p_\mu(K)$, $1 \leq p \leq \infty$, and equipped with the usual norms $\smash{\|\cdot\|_{L_\mu^p(K)}}$, 
see \cite[section 5.5]{Benedetto2010}. For the open ball of radius $r > 0$ 
in a normed space $(Z, \|\cdot\|_Z)$ centered at a point $z \in Z$, we use the symbol $B_r^Z(z)$,
and for the topological closure of a set $E \subset Z$, the symbol $\closure_Z(E)$.

The neural networks that we study in this paper are standard fully connected feedforward neural networks
with a $d$-dimensional real input and a one-dimensional real output
(with $d$ being the dimension of the Euclidean space $\R^d \supset K$). 
We denote the number of hidden layers of a network with $L \in \mathbb{N}$
and the widths of the hidden layers with $w_i \in \mathbb{N}$, $i=1,...,L$. 
For the ease of notation, we also introduce the definitions $w_0 := d$ and $w_{L+1} := 1$
for the in- and output layer. The weights and biases are denoted by $A_{i} \in \R^{w_{i} \times w_{i-1}}$
and $b_{i} \in \R^{w_{i}}$, $i=1,...,L+1$, respectively,
and the activation functions of the layers by $\sigma_i\colon \R \to \R$, $i=1,...,L$.
Here and in what follows, all vectors of real numbers 
are considered as column vectors. 
We will always assume that the functions $\sigma_i$ are continuous, i.e., $\sigma_i \in C(\R)$ for
all $i=1,...,L$.
To describe the action of the network layers, we 
define $\varphi_i^{A_i, b_i}\colon \R^{w_{i-1}} \to \R^{w_i}$, $i=1,...,L+1$, to be the functions 
\[
\varphi_i^{A_i, b_i}(z) := \sigma_i\left (A_i z + b_i \right ),~\forall i=1,...,L,
\qquad \varphi_{L+1}^{A_{L+1}, b_{L+1}}(z) := A_{L+1}z + b_{L+1},
\]
with $\sigma_i$ acting componentwise on the entries of $ A_{i}z + b_{i} \in \R^{w_i}$. 
Overall, this notation allows us to denote a feedforward neural network in the following way:
\begin{equation}
\label{eq:network}
\psi(\alpha, \cdot)\colon \R^d \to \R,\qquad 
\psi(\alpha, x) 
:= \left ( \varphi_{L+1}^{A_{L+1}, b_{L+1}} \circ ... \circ \varphi_{1}^{A_{1}, b_{1}} \right )(x).
\end{equation}
Here, we have introduced the variable
$\alpha := 
\{ (A_{i}, b_{i})\}_{i=1}^{L+1}
$
as an abbreviation for the collection of all network parameters
and the symbol ``$\circ$'' to denote a composition.
For the set of all possible $\alpha$, i.e., the parameter space of a network, we write
\[
D := 
\left \{
\alpha = 
\{ (A_{i}, b_{i})\}_{i=1}^{L+1}
\; \Big | \;
A_{i} \in \R^{w_{i} \times w_{i-1}},\,
b_{i} \in \R^{w_{i}},~
\forall i=1,...,L+1
\right \}.
\]
We equip the parameter space $D$ with the Euclidean norm $|\cdot|$ of the space $\R^{m}$,
$m := w_{L+1} (w_{L} + 1) + ... + w_{1}(w_{0} + 1)$, 
that $D$ can be transformed into by rearranging the entries of $\alpha$. 
Note that this implies that $m = \dim(D)$ holds, where $\dim(\cdot)$ denotes the 
dimension of a vector space in the sense of linear algebra.
Due to the continuity of the activation functions $\sigma_i$, 
the map $\psi\colon D \times \R^d \to \R$ in \eqref{eq:network} 
gives rise to an operator from $D$ into the space $C(K)$. 
We denote this operator by $\Psi$, i.e., 
\begin{equation}
\label{eq:Psidef}
\Psi\colon D \to C(K),\qquad \Psi(\alpha) := \psi(\alpha,\cdot)\colon K \to \R.
\end{equation}
Using the function $\Psi$, we can formulate the training problems that we are interested in 
as follows:
\begin{equation*}
\label{eq:P}
\tag{P}
		\text{Minimize} \quad  \LL(\Psi(\alpha), y_T)\qquad \text{w.r.t.}\quad \alpha \in D.
\end{equation*}
Here, $\LL\colon C(K) \times C(K)\to \R$ denotes the loss function and $y_T \in C(K)$ the target function. 
We call $\mathcal{L}$ Gâteaux differentiable in
its first argument at $(v, y_T) \in C(K) \times C(K)$ if 
the limit
\[
\partial_1 \LL(v, y_T; h) := \lim_{s \to 0^+} \frac{\LL(v + s h, y_T) - \LL(v, y_T)}{s} \in \R
\]
exists for all $h \in C(K)$ and 
if the map  $\partial_1 \LL(v, y_T; \cdot)\colon C(K) \to \R$, $h \mapsto \partial_1 \LL(v, y_T;h)$,
is linear and continuous, i.e., an element of the topological dual space of $C(K)$. In this case, 
$\partial_1 \LL(v, y_T):= \partial_1 \LL(v, y_T; \cdot) \in \MM(K)$ is called the 
partial Gâteaux derivative of $\LL$ at $(v, y_T)$ w.r.t.\ the first argument, cf.\ 
\cite[section 2.2.1]{BonnansShapiro2000}.
As usual,
a local minimum of \eqref{eq:P} is a point $\bar \alpha \in D$ that satisfies 
\[
 \LL(\Psi(\alpha), y_T) \geq  \LL(\Psi(\bar \alpha), y_T),\qquad \forall \alpha \in B_r^D(\bar \alpha),
\]
for some $r > 0$. If $r$ can be chosen as $+\infty$, then we call 
$\bar \alpha$ a global minimum of \eqref{eq:P}. For a local minimum that is not 
a global minimum, we use the term \emph{spurious local minimum}. We would like to point 
out that we will not discuss the existence of global minima of \eqref{eq:P} in this paper.
In fact, it is easy to construct examples in which \eqref{eq:P} does not admit any global solutions, cf.\ \cite{Petersen2021}.
We will  focus entirely on the existence of spurious local minima that may prevent optimization algorithms
from producing a minimizing sequence for \eqref{eq:P}, i.e., 
a sequence $\{\alpha_k\}_{k=1}^\infty \subset D$ satisfying 
\[
\lim_{k \to \infty}  \LL(\Psi(\alpha_k), y_T) = \inf_{\alpha \in D}  \LL(\Psi(\alpha), y_T).
\]

For later use, we recall that the Hausdorff dimension $\dim_\HH(E)$ of a set $E \subset \R^m$ is defined by 
\[
\dim_\HH(E) := \inf \left \{ s \in [0, \infty) \;\big |\; \HH_s(E) = 0 \right \},
\]
where $\HH_s(E)$ denotes the $s$-dimensional Hausdorff outer measure 
\begin{equation}
\label{eq:Hausdorffoutermeasure}
\HH_s(E) := \lim_{\epsilon \to 0^+} \left ( \inf \left \{ \sum_{l=1}^\infty \diam(E_l)^s \; \Bigg | \; 
E \subset \bigcup_{l =1}^\infty E_l,\,\diam(E_l) < \epsilon \right \} \right ).
\end{equation}
Here, $\diam(\cdot)$ denotes the diameter of a set and 
the infimum on the right-hand side of \eqref{eq:Hausdorffoutermeasure} is taken over the set of covers $\{E_l\}_{l=1}^\infty$;
see \cite[Sections 3.5, 3.5.1c]{DiBenedetto2016}. Note that the Hausdorff dimension 
of a subspace is identical to the ``usual'' dimension of the subspace in the sense of 
linear algebra. In particular, we have $\dim(D) = \dim_\HH(D) = m$.

\section{Main results on the existence of spurious local minima}%
\label{sec:3}%
With the notation in place, 
we are in the position to formulate our main results on the existence of spurious 
local minima in training problems for neural networks whose activation functions possess
an affine segment. To be  precise, we state
our main observation in the form of two theorems -- one for activation functions
with a nonconstant affine segment and one for activation functions
with a constant segment. 

\begin{theorem}[case I: activation functions with a nonconstant affine segment]%
\label{th:spuriousminima-nonconstant}%
Let $K \subset \R^d$, $d \in \mathbb{N}$,
be a nonempty compact set and let 
$\psi \colon D \times \R^d \to \R$ be a neural network
with depth $L \in \mathbb{N}$, widths 
$w_i \in \mathbb{N}$, $i=0,...,L+1$, and nonpolynomial continuous activation functions
$\sigma_i\colon \R \to \R$, $i=1,...,L$, as in \eqref{eq:network}. Assume that:
\begin{enumerate}[label=\roman*)]
\item\label{th:spur1:item:i} 
$w_i \geq 2$ holds for all $i=1,...,L$.
\item\label{th:spur1:item:ii}   
$\sigma_i$ is affine and nonconstant on an open interval $I_i \neq \emptyset$ for all $i=1,...,L$.
\item\label{th:spur1:item:iii}  
$y_T \in C(K)$ is nonaffine, i.e., $\nexists (a,c) \in \R^d \times \R \colon y_T(x) = a^\top x + c,\;\forall x \in K$.\pagebreak
\item\label{th:spur1:item:iv} 
 $\LL\colon C(K) \times C(K)\to \R$ 
 is Gâteaux differentiable in its first argument with a nonzero partial derivative 
 at all points $(v, y_T) \in C(K) \times C(K)$ with $v \neq y_T$.
\item\label{th:spur1:item:v} 
 $\LL$ and $y_T$ are such that there exists a global solution $(\bar a, \bar c)$ of the problem 
\[
		\text{Minimize}\quad\LL( z_{a,c}, y_T)
		\quad~~~ \text{w.r.t.}\quad (a, c) \in \R^d \times \R
		\quad~~~ \text{s.t.}\quad z_{a,c}(x) = a^\top x + c.
\]
\end{enumerate}
Then there exists a set $E \subset D$ of Hausdorff dimension $\dim_\HH(E) \geq m - d - 1$
such that all elements of $E$ are spurious local minima of the training problem
\begin{equation*}
\tag{P}
		\text{Minimize} \quad  \LL(\Psi(\alpha), y_T)\qquad \text{w.r.t.}\quad \alpha \in D
\end{equation*}
and such that it holds
\[
\LL(\Psi(\alpha), y_T) =  \min_{(a,c) \in \R^d \times \R} \LL(z_{a,c}, y_T),\qquad \forall \alpha \in E.
\]
\end{theorem}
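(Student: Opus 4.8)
The plan is to produce $E$ as a large family of network parameters that all \emph{emulate} the best affine function $z_{\bar a,\bar c}$, to verify local minimality by a soft openness argument, and to deduce spuriousness from the absence of supporting half-spaces. Write $\sigma_i(t)=\mu_i t+\nu_i$ on $I_i$ with $\mu_i\neq 0$ (nonconstant by \ref{th:spur1:item:ii}). Call $\alpha$ \emph{affine-regime} if, for every $i=1,\dots,L$ and every $x\in K$, all pre-activations entering $\sigma_i$ lie in $I_i$; since $K$ is compact and each $I_i$ is open, this defines an \emph{open} set $U\subset D$. On $U$ every hidden layer acts as an affine map, so $\Psi(\alpha)=z_{a(\alpha),c(\alpha)}$ with $(a,c)\colon U\to\R^{d+1}$ a polynomial map. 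By \cref{lem:aff1,lem:aff2} there is $\bar\alpha\in U$ with $\Psi(\bar\alpha)=z_{\bar a,\bar c}$, and I set $E:=U\cap (a,c)^{-1}(\bar a,\bar c)$. For the dimension I show $(a,c)$ is a submersion at a suitable point: $b_{L+1}$ moves $c$ freely, and, writing $a(\alpha)^\top=A_{L+1}(\mu_L A_L)\cdots(\mu_1 A_1)=q^\top A_1$ with $q^\top:=A_{L+1}(\mu_L A_L)\cdots(\mu_2 A_2)\mu_1\in\R^{1\times w_1}$, any perturbation $\delta a^\top=q^\top\delta A_1$ realizes an arbitrary direction of $\R^d$ once $q\neq 0$ (choose $\delta A_1=(q/|q|^2)\,u^\top$). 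Hence the Jacobian of $(a,c)$ is onto $\R^{d+1}$, the fiber $E$ is locally an $(m-d-1)$-manifold, and $\dim_\HH(E)\geq m-d-1$.

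For local minimality, fix $\alpha\in E$ and pick $r>0$ with $B_r^D(\alpha)\subset U$. On this ball $\Psi$ takes only affine values, so by \ref{th:spur1:item:v}
$\LL(\Psi(\cdot),y_T)\geq \min_{(a,c)}\LL(z_{a,c},y_T)=\LL(z_{\bar a,\bar c},y_T)=\LL(\Psi(\alpha),y_T)$.
Thus every $\alpha\in E$ is a local minimum whose value is exactly $\min_{(a,c)}\LL(z_{a,c},y_T)$, which already yields the claimed identity.

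The crux is spuriousness. Put $v_0:=z_{\bar a,\bar c}$. Since $y_T$ is nonaffine and $v_0$ is affine, $v_0\neq y_T$, so by \ref{th:spur1:item:iv} the measure $\lambda:=\partial_1\LL(v_0,y_T)\in\MM(K)$ is nonzero. By \cref{th:nohalfspaces} the image $\Psi(D)$ admits no supporting half-space, so there exists $v_1\in\Psi(D)$ with $\langle \lambda, v_1-v_0\rangle_{C(K)}<0$. The decisive step is to realize a whole segment emanating from $v_0$ \emph{inside} $\Psi(D)$: because $\sigma_1$ is continuous and nonpolynomial, the density result underlying \cref{th:nohalfspaces} (Pinkus) provides $a_1\in\R^d$, $e_1\in\R$ with $\langle \lambda, g\rangle_{C(K)}\neq 0$ for $g:=\sigma_1(a_1^\top(\cdot)+e_1)$. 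Keeping one first-layer neuron at the inner weights $(a_1,e_1)$, emulating all remaining hidden neurons affinely with small multiplicative factors so that the deeper pre-activations stay in their $I_i$, and routing a scalar $t$ through the \emph{unconstrained} output layer $L+1$, \cref{lem:aff1,lem:aff2} let me construct $w_t:=v_0+t\,g\in\Psi(D)$ for $t$ in an interval around $0$, with the sign of $t$ chosen so that $t\langle\lambda,g\rangle_{C(K)}<0$. Then $\phi(t):=\LL(w_t,y_T)$ has $\phi'(0^+)=\partial_1\LL(v_0,y_T;g)\cdot\operatorname{sgn}(t)=t\langle\lambda,g\rangle_{C(K)}/|t|<0$, so some $w_{t_0}\in\Psi(D)$ satisfies $\LL(w_{t_0},y_T)<\LL(v_0,y_T)$. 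Hence $\bar\alpha$ is not globally optimal and every element of $E$ is a spurious local minimum.

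I expect the realization step in the last paragraph to be the \textbf{main obstacle}. Since each $\alpha\in E$ is a stationary point, no smooth perturbation within $U$ can decrease the loss, so descent must be produced by leaving the affine regime in a single neuron; the content of \cref{th:nohalfspaces} (equivalently, the universal approximation theorem) is precisely what guarantees a genuinely nonlinear, \emph{attainable} direction $g=\sigma_1(a_1^\top(\cdot)+e_1)$ pairing nontrivially with the nonzero derivative measure $\lambda$, and the care needed to keep all deeper layers affine while scaling $g$ through the output layer is the only delicate bookkeeping.
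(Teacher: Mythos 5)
Your proposal is correct and follows essentially the same route as the paper: the set $E$ is obtained by emulating the affine best approximation through the affine segments of the activations (the paper's \cref{lem:aff1}), and spuriousness is deduced from the fact that the nonzero derivative measure $\partial_1\LL(z_{\bar a,\bar c},y_T)$ cannot pair nonnegatively with everything the network can realize on top of $z_{\bar a,\bar c}$ (\cref{th:nohalfspaces}, i.e., the universal approximation theorem). The only cosmetic differences are that you count the dimension of $E$ via a submersion/implicit-function argument instead of the paper's explicit graph construction, and that you produce the descent direction constructively as a single first-layer neuron $g$ with $\left\langle\lambda,g\right\rangle_{C(K)}\neq 0$ rather than running the paper's contradiction argument (\cref{lem:aff3}) with the full width-$(w_i-1)$ residual sub-network.
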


\begin{theorem}[case II: activation functions with a constant segment]%
\label{th:spuriousminima-constant}%
Suppose that $K \subset \R^d$, $d \in \mathbb{N}$,
is a nonempty compact set and let 
$\psi\colon D \times \R^d \to \R$ be a neural network
with depth $L \in \mathbb{N}$, widths 
$w_i \in \mathbb{N}$, $i=0,...,L+1$, and nonpolynomial continuous activation functions
$\sigma_i\colon \R \to \R$, $i=1,...,L$, as in \eqref{eq:network}. Assume that:
\begin{enumerate}[label=\roman*)]
\item\label{th:spur2:item:i}    
$\sigma_j$ is constant on an open interval $I_j \neq \emptyset$ for some  $j \in \{1,...,L\}$.
\item\label{th:spur2:item:ii}    
$y_T \in C(K)$ is nonconstant, i.e., $\nexists c \in \R \colon y_T(x) = c,\;\forall x \in K$.
\item\label{th:spur2:item:iii}    
$\LL\colon C(K) \times C(K)\to \R$ 
 is Gâteaux differentiable in its first argument with a nonzero partial derivative 
 at all points $(v, y_T) \in C(K) \times C(K)$ with $v \neq y_T$.
\item\label{th:spur2:item:iv}    
$\LL$ and $y_T$ are such that there exists a global solution $\bar c$ of the problem 
\[
		\text{Minimize}\quad\LL( z_{c}, y_T)
		\quad~~~ \text{w.r.t.}\quad c \in \R
		\quad~~~ \text{s.t.}\quad z_{c}(x) = c. 
\]
\end{enumerate}
Then there exists a set $E \subset D$ of Hausdorff dimension $\dim_\HH(E) \geq m - 1$
such that all elements of $E$ are spurious local minima of the training problem
\begin{equation*}
\tag{P}
		\text{Minimize} \quad  \LL(\Psi(\alpha), y_T)\qquad \text{w.r.t.}\quad \alpha \in D
\end{equation*}
and such that it holds
\[
\LL(\Psi(\alpha), y_T) =  \min_{c \in  \R} \LL(z_{c}, y_T),\qquad \forall \alpha \in E.
\]
\end{theorem}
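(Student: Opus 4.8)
The plan is to build $E$ out of parameter tuples for which the network collapses, \emph{locally in a whole neighborhood}, to the best constant approximation $\bar c$, and to read off local minimality from an openness argument and spuriousness from the absence of supporting half-spaces in \cref{th:nohalfspaces}. First I would fix $j$ as in \ref{th:spur2:item:i} and write $\kappa$ for the value of $\sigma_j$ on $I_j$. Since $K$ is compact, the output of layer $j-1$ stays in a bounded set for all $x \in K$, uniformly for parameters in a neighborhood; hence, choosing $A_j$ small (e.g.\ $A_j = 0$) and each component of $b_j$ in the interior of $I_j$, the preactivation $A_j z + b_j$ of layer $j$ lands in $I_j$ componentwise for every $x\in K$. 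On this regime $\sigma_j$ returns the constant vector $(\kappa,\dots,\kappa)^\top$, so the output of layer $j$ — and therefore $\Psi(\alpha)$ — is a constant function of $x$ (this is the constant-activation instance of the emulation idea behind \cref{lem:aff1,lem:aff2}). The key observation is that the set $\Omega$ of parameters of layers $1,\dots,j$ for which the layer-$j$ output equals $(\kappa,\dots,\kappa)^\top$ identically on $K$ is \emph{open}.

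Next I would realize the target value and derive local minimality. For parameters with their layer-$(1,\dots,j)$ part in $\Omega$, the remaining layers $j+1,\dots,L+1$ act on the constant input $(\kappa,\dots,\kappa)^\top$ and produce a constant output $c(\alpha)\in\R$; since the output layer is affine, $b_{L+1}$ can be chosen as a continuous function of the other parameters so that $c(\alpha)=\bar c$. I define $E$ to be the set of all such $\alpha$: layers $1,\dots,j$ ranging over $\Omega$, layers $j+1,\dots,L$ and $A_{L+1}$ free, and $b_{L+1}$ pinned to enforce $\Psi(\alpha)\equiv\bar c$. For any $\alpha\in E$ and any $\alpha'$ in a small enough ball, the layer-$(1,\dots,j)$ part of $\alpha'$ still lies in the open set $\Omega$, so $\Psi(\alpha')$ is again constant, say $z_{c(\alpha')}$; by \ref{th:spur2:item:iv} this gives $\LL(\Psi(\alpha'),y_T)=\LL(z_{c(\alpha')},y_T)\ge \min_{c}\LL(z_c,y_T)=\LL(\bar c,y_T)=\LL(\Psi(\alpha),y_T)$, so every $\alpha\in E$ is a local minimum attaining exactly the asserted value.

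For spuriousness I would set $\mu:=\partial_1\LL(\bar c,y_T)$. Since $y_T$ is nonconstant by \ref{th:spur2:item:ii}, we have $\bar c\neq y_T$, hence $\mu\neq 0$ by \ref{th:spur2:item:iii}. As $\bar c\in\Psi(D)$ and \cref{th:nohalfspaces} forbids supporting half-spaces of the image, there is $v^\ast=\Psi(\alpha^\ast)\in\Psi(D)$ with $\langle\mu,v^\ast\rangle_{C(K)}<\langle\mu,\bar c\rangle_{C(K)}$. Keeping the hidden layers of $\alpha^\ast$ fixed and varying only the affine output layer, the whole segment $\bar c+s(v^\ast-\bar c)$, $s\in[0,1]$, lies in $\Psi(D)$; differentiating $s\mapsto\LL(\bar c+s(v^\ast-\bar c),y_T)$ at $s=0^+$ yields $\langle\mu,v^\ast-\bar c\rangle_{C(K)}<0$, so $\LL(\bar c,y_T)$ is not the global infimum. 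Since every element of $E$ attains exactly this value, all of $E$ consists of spurious local minima.

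For the dimension bound, note that in the parametrization of $E$ everything except $b_{L+1}\in\R$ is free over an open set, while $b_{L+1}$ is a continuous function of the rest; thus $E$ contains the graph of a continuous map over an open subset of $\R^{m-1}$. Projection back to this open set is $1$-Lipschitz and onto, and Lipschitz maps do not increase Hausdorff dimension, whence $\dim_\HH(E)\ge m-1$. I expect the main obstacle to be the spuriousness step, exactly as flagged in the introduction: my plan avoids any hand-crafted competitor network by combining two structural facts — the nonexistence of supporting half-spaces of $\Psi(D)$ (\cref{th:nohalfspaces}) and the affine output layer, which makes output-direction segments realizable in $\Psi(D)$ — so that a single directional-derivative computation certifies that $\LL$ can be strictly decreased from $\bar c$. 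The routine but essential supporting point is verifying that $\Omega$ is genuinely open, i.e.\ that the layer-$(j-1)$ output is controlled uniformly over the compact set $K$ under parameter perturbations.
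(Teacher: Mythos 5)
Your proposal is correct and follows essentially the same route as the paper: the same saturation of the constant segment of $\sigma_j$ over the compact set $K$ to produce an open parameter region on which $\Psi$ is constant, the same pinning of $b_{L+1}$ as a continuous function of the remaining parameters to realize $\bar c$ (giving the graph/Hausdorff-dimension bound as in \cref{lem:aff2}), and the same use of \cref{th:nohalfspaces} together with the rescalability of the affine output layer to certify spuriousness. The only cosmetic difference is that you phrase the spuriousness step directly -- extracting a descent direction $v^*$ and a realizable segment -- whereas \cref{lem:aff4} runs the equivalent argument by contradiction.
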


The proofs of \cref{th:spuriousminima-nonconstant,th:spuriousminima-constant}
rely on geometric properties of the image $\Psi(D)$ of the function 
$\Psi\colon D \to C(K)$ in \eqref{eq:Psidef} and are carried out in \cref{sec:4}, 
see \cref{th:nohalfspaces,lem:aff1,lem:aff2,lem:aff3,lem:aff4,}.
Before we discuss them in detail, we give some remarks on the applicability and scope 
of \cref{th:spuriousminima-nonconstant,th:spuriousminima-constant}. 

First of all, we would like to point out that -- as far as continuous activation functions 
with an affine segment are concerned -- the assumptions on the maps $\sigma_i$ in
\cref{th:spuriousminima-nonconstant,th:spuriousminima-constant} are optimal.
The only continuous $\sigma_i$ that are locally affine and not covered by 
\cref{th:spuriousminima-nonconstant,th:spuriousminima-constant}  are globally affine functions
and for those it has been proven in \cite{Kawaguchi2016} that spurious local minima do not exist 
so that relaxing the assumptions on $\sigma_i$ in 
\cref{th:spuriousminima-nonconstant,th:spuriousminima-constant} 
in this direction is provably impossible. 
Compare also with \cite{Eftekhari2020,Laurent2018,Saxe2013,Yun2018,Zou2020} in this context.
Note that 
\cref{th:spuriousminima-nonconstant} covers in particular 
neural networks that involve an arbitrary mixture of 
PLU-,
ISRLU-, ELU-, ReLU-, and leaky/parametric ReLU-activations and that \cref{th:spuriousminima-constant}
applies, for instance, to neural networks with a ReLU- or an SQNL-layer; see \cite{Nicolae2018} and 
\cite[Corollary 40]{Christof2021} for the definitions of these functions. 
Because of this, the assertions of \cref{th:spuriousminima-nonconstant,th:spuriousminima-constant} 
hold in many situations arising in practice. 

Second, we remark that \cref{th:spuriousminima-nonconstant,th:spuriousminima-constant}  
can be rather easily extended
to neural networks with a vectorial output. For such networks,
the assumptions on the widths $w_i$ in point \ref{th:spur1:item:i} of \cref{th:spuriousminima-nonconstant}
have to be adapted depending on the in- and output dimension, but the basic 
ideas of the proofs remain the same, cf.\ the analysis 
of \cite{He2020} and the proof of \cite[Corollary 47]{Christof2021}. 
In particular, the arguments that we use in \cref{sec:4} to establish 
that the local minima in $E$ are indeed spurious carry over immediately.  
Similarly, it is also possible to extend the ideas presented in this paper to residual neural networks.
To do so, one can exploit that networks with skip connections can emulate 
classical multilayer perceptron architectures of the type \eqref{eq:network} on the training set $K$ 
by saturation, cf.\ \cite[proof of Corollary 52]{Christof2021}, and that skip connections do not impair the ability of a 
network with locally affine activation functions to emulate an affine linear mapping,
cf.\ the proofs of \cref{lem:aff1,lem:aff2}.
We omit discussing these generalizations in detail here to simplify the presentation.

Regarding the assumptions on $\LL$, it should be noted that 
the conditions in points \ref{th:spur1:item:iv} and \ref{th:spur1:item:v} of \cref{th:spuriousminima-nonconstant}
and points \ref{th:spur2:item:iii} and \ref{th:spur2:item:iv} of \cref{th:spuriousminima-constant}
are not very restrictive. The assumption that the partial Gâteaux derivative $\partial_1 \LL(v, y_T)$
is nonzero for $v \neq y_T$ simply expresses that the map $\LL(\cdot, y_T)\colon C(K) \to \R$
should not have any stationary points away from $y_T$.
This is a reasonable thing to assume since the purpose of the loss function is to 
measure the deviation from $y_T$ so that stationary points away from $y_T$ are not sensible. 
In particular, this assumption is automatically satisfied if $\LL$ has the form 
$\LL(v, y_T) = \FF(v - y_T)$ with a convex function $\FF\colon C(K) \to [0, \infty)$
that is Gâteaux differentiable in $C(K) \setminus \{0\}$ and satisfies $\FF(v) = 0$ iff $v= 0$. 
Similarly, the assumptions on the existence of 
the minimizers $(\bar a, \bar c)$ and $\bar c$ 
in  \cref{th:spuriousminima-nonconstant,th:spuriousminima-constant}
simply express that 
there should exist an affine linear/constant best approximation for $y_T$ 
w.r.t.\ the notion of approximation quality encoded in $\LL$. This condition is, for instance, 
satisfied when restrictions of the map $\LL( \cdot, y_T)\colon C(K) \to \R$ 
to finite-dimensional subspaces of $C(K)$
are radially unbounded and lower semicontinuous. A prototypical class 
of functions $\LL$ that satisfy all of the above conditions 
are tracking-type functionals in reflexive Lebesgue spaces
as the following lemma shows.

\begin{lemma}
\label{lem:LptrackingFunctional}
Let $K \subset \R^d$ be nonempty and compact, let $\mu \in \MM_+(K)$
be a measure whose support is equal to $K$, and let $1 < p < \infty$ be given. 
Define 
\begin{equation}
\label{eq:muloss}
\LL\colon C(K) \times C(K) \to [0, \infty),\qquad \LL(v, y_T) := \int_K |v  - y_T|^p \dd\mu.
\end{equation}
Then the function 
$\LL$ satisfies the assumptions \ref{th:spur1:item:iv} and \ref{th:spur1:item:v} of \cref{th:spuriousminima-nonconstant}
and the assumptions \ref{th:spur2:item:iii} and \ref{th:spur2:item:iv} of \cref{th:spuriousminima-constant}
for all  $y_T \in C(K)$.
\end{lemma}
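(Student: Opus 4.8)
The plan is to check the two differentiability conditions (\ref{th:spur1:item:iv} and \ref{th:spur2:item:iii}) together and the two existence conditions (\ref{th:spur1:item:v} and \ref{th:spur2:item:iv}) together, since within each group the argument is identical.

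For the derivative, I would fix $v, y_T, h \in C(K)$, abbreviate $g := v - y_T$, and differentiate the scalar map $t \mapsto |t|^p$ pointwise. For $1 < p < \infty$ this map is continuously differentiable on $\R$ with derivative $t \mapsto p|t|^{p-2}t$, which is continuous at $t = 0$ with value $0$ because $p - 1 > 0$. As $K$ is compact and $g, h$ are continuous, the family $\{|g + sh|^p\}$ and its $s$-derivatives are uniformly bounded for $s$ near $0$, so dominated convergence (differentiation under the integral sign) gives
\[
\partial_1 \LL(v, y_T; h) = p \int_K |v - y_T|^{p-2}(v - y_T)\, h \dd\mu .
\]
Writing $w := p|v - y_T|^{p-2}(v - y_T) \in C(K)$, this functional is $h \mapsto \langle w\mu, h \rangle_{C(K)}$, which is linear and continuous because $w\mu$ is a finite signed Radon measure; hence $\partial_1 \LL(v, y_T) = w\mu \in \MM(K)$ exists in the sense of \cref{sec:2}. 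To see that it is nonzero whenever $v \neq y_T$, I would evaluate it at $h = v - y_T$, which yields $p \int_K |v - y_T|^p \dd\mu$. Since $v - y_T$ is continuous and not identically zero, it is bounded away from zero on some nonempty open subset of $K$, and the hypothesis $\operatorname{supp}(\mu) = K$ forces this set to have positive $\mu$-measure; the integral is therefore strictly positive and the derivative cannot vanish.

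For the existence of best approximations, I would recast both minimizations as best-approximation problems in $L^p_\mu(K)$. Setting $V := \span\{1, x_1, \dots, x_d\} \subseteq C(K)$ (respectively $V := \span\{1\}$ in the constant case), one has $\{z_{a,c}\} = V$ and $\inf_{(a,c)} \LL(z_{a,c}, y_T) = \inf_{z \in V}\|z - y_T\|_{L^p_\mu(K)}^p$. Because $\operatorname{supp}(\mu) = K$, the embedding $C(K) \hookrightarrow L^p_\mu(K)$ is injective, so $\|\cdot\|_{L^p_\mu(K)}$ restricts to a genuine norm on the finite-dimensional space $V$. The distance functional $z \mapsto \|z - y_T\|_{L^p_\mu(K)}$ is then continuous and coercive on $V$ and attains its minimum at some $\bar z \in V$ by the Weierstrass theorem; writing $\bar z = z_{\bar a, \bar c}$ (resp. $\bar z = z_{\bar c}$) produces the required global minimizer.

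I expect this existence step to be the main obstacle. The functional $f(a, c) := \LL(z_{a,c}, y_T)$ is convex and continuous but in general \emph{not} coercive on $\R^{d+1}$: if the functions $1, x_1, \dots, x_d$ are affinely dependent on $K$ (for instance when $K$ is a single point or lies in a proper affine subspace), then $f$ is constant along a nontrivial subspace of parameters and a direct Weierstrass argument on $\R^{d+1}$ breaks down. Passing to the subspace $V$ and using that finite-dimensional subspaces are proximinal circumvents this, and it is precisely the full-support assumption on $\mu$ that makes $\|\cdot\|_{L^p_\mu(K)}$ a norm on $V$ there, just as it is the full-support assumption that supplies positive mass in the nonvanishing argument.
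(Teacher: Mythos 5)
Your proposal is correct and follows essentially the same route as the paper: dominated convergence for the Gâteaux derivative, full support of $\mu$ for the nonvanishing, and a Weierstrass argument on the finite-dimensional subspace of $C(K)$ (where $\|\cdot\|_{L^p_\mu(K)}$ is a genuine norm) for the existence of the affine/constant best approximation. The only cosmetic difference is in the nonvanishing step, where you test the derivative against $h = v - y_T$ directly rather than invoking the density of $C(K)$ in $L^q_\mu(K)$ as the paper does; both are valid and yours is marginally more elementary.
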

\begin{proof}
From the dominated convergence theorem \cite[Theorem 3.3.2]{Benedetto2010},
it follows that $\LL\colon C(K) \times C(K) \to [0, \infty)$ is Gâteaux differentiable everywhere with 
\begin{equation}
\label{eq:randomeq364545}
\left \langle \partial_1 \LL(v, y_T), z\right \rangle_{C(K)} = \int_K p \sgn(v - y_T) |v - y_T|^{p-1} z \dd \mu,\qquad \forall v, y_T, z \in C(K).
\end{equation}
Since $C(K)$ is dense in $L_{\mu}^q(K)$
for all $1 \leq q < \infty$ by \cite[Proposition 7.9]{Folland1999}, 
\eqref{eq:randomeq364545} yields 
\begin{equation}
\label{eqrandomeq3636}
\partial_1 \LL(v, y_T) = 0 \in \MM(K)
\qquad 
\iff
\qquad
\int_K p |v - y_T|^{p-1} \dd \mu = 0.
\end{equation}
Due to the continuity of the function $|v - y_T|^{p-1}$ and since the assumptions on $\mu$
imply that 
$\mu(O) > 0$ holds for all 
$O \in \tau_K \setminus \{\emptyset\}$, 
the right-hand side  of
\eqref{eqrandomeq3636} can only be true if $v - y_T$ is the zero function in $C(K)$, i.e., if $v = y_T$.
This shows that $\LL$ indeed satisfies 
condition \ref{th:spur1:item:iv} in \cref{th:spuriousminima-nonconstant}
and  condition \ref{th:spur2:item:iii}  in \cref{th:spuriousminima-constant}
for all $y_T \in C(K)$. To see that $\LL$ also satisfies assumption 
\ref{th:spur1:item:v} of \cref{th:spuriousminima-nonconstant}
and assumption \ref{th:spur2:item:iv} of \cref{th:spuriousminima-constant}, it suffices to note that 
$\|\cdot\|_{L^p_\mu(K)}$ defines a norm on $C(K)$ due to the assumptions on $\mu$.
This implies that restrictions of the map $\LL( \cdot, y_T)\colon C(K) \to \R$ to 
finite-dimensional subspaces of $C(K)$ are continuous and radially unbounded for all arbitrary but fixed $y_T \in C(K)$
and that the theorem of Weierstrass can be used to establish the
existence of the minimizers $(\bar a, \bar c)$ and $\bar c$ in 
points \ref{th:spur1:item:v} and \ref{th:spur2:item:iv} of \cref{th:spuriousminima-nonconstant,th:spuriousminima-constant},
respectively.
\end{proof}

Note that,
in the case $\mu =\frac{1}{n} \sum_{k=1}^n  \delta_{x_k}$,
$K = \{x_1,...,x_n\} \subset \R^d$, $d \in \mathbb{N}$, $n \in \mathbb{N}$,
i.e., in the situation where $\mu$ is the normalized sum of $n$ Dirac measures 
supported at points $x_k \in \R^d$, $k=1,...,n$,
a problem \eqref{eq:P} with a loss function of the form \eqref{eq:muloss} 
can be recast as
\begin{equation}
\label{eq:plossfinite}
		\text{Minimize} \quad \frac{1}{n} \sum_{k=1}^n |\psi(\alpha,x_k) - y_T(x_k) |^p  \qquad \text{w.r.t.}\quad \alpha \in D.
\end{equation}
In particular, for $p=2$, one recovers a classical squared loss problem 
with a finite number of data samples.
This shows that our results
indeed extend \cite[Theorem~1]{Yun2018}, where the assertion of \cref{th:spuriousminima-nonconstant}
was proven for 
finite-dimensional squared loss training problems for one-hidden-layer neural networks 
with activation functions of parameterized ReLU-type. 
Compare also with \cite{Christof2021,Ding2020,He2020,Liu2021}  in this context.
Another natural choice for $\mu$ in \eqref{eq:muloss} is the restriction of 
the Lebesgue measure to the Borel sigma-algebra of the closure $K$
of a nonempty bounded open set $\Omega \subset \R^d$. For this choice, \eqref{eq:P}
becomes a standard $L^p$-tracking-type problem as often considered in the field 
of optimal control, cf.\ \cite{Christof2021-2} and the references therein.
A further interesting example is the case $K = \closure_{\R^d}(\{x_k\}_{k=1}^\infty)$ and 
$\mu = \sum_{k=1}^\infty c_k \delta_{x_k}$ 
involving a bounded sequence of points
$\{x_k\}_{k=1}^\infty \subset \R^d$ and weights $\{c_k\}_{k=1}^\infty \subset (0, \infty)$ with 
\mbox{$\sum_{k=1}^\infty c_k < \infty$}.
Such a measure $\mu$ gives rise to a training problem in an intermediate regime 
between the finite and continuous sampling case. 

We remark that, for problems of the type \eqref{eq:plossfinite} with $p=2$, it can be shown that 
the spurious local minima in \cref{th:spuriousminima-nonconstant,th:spuriousminima-constant} can be arbitrarily bad 
in the sense that they may yield loss values that are arbitrarily far away from the optimal one and 
may give rise to realization vectors $\{\psi(\alpha,x_k)\}_{k=1}^n$ that are arbitrarily far away
in relative and absolute terms
from every optimal realization vector of the network. 
For a precise statement of these results for finite-dimensional squared loss problems and the definitions of the related concepts,
we refer the reader to \cite[Corollary 47, Definition 3, and Estimates (39), (40)]{Christof2021}.
Similarly, it can also be proven that the appearance of spurious local minima in problems of the type 
\eqref{eq:plossfinite} can, in general, not
be avoided 
by adding a regularization term to the loss function that penalizes the size of the 
parameters in $\alpha$, see \cite[Corollary 51]{Christof2021}. We remark that 
the proofs used to establish these results in \cite{Christof2021} 
all make use of compactness arguments and homogeneity properties of $\LL$
and thus do not carry over to the general infinite-dimensional setting 
considered in \cref{th:spuriousminima-nonconstant,th:spuriousminima-constant}, cf.\ 
the derivation of \cite[Lemma 10]{Christof2021}.

As a final remark, we would like to point out that, in the degenerate case $n=1$, 
the training problem \eqref{eq:plossfinite} \emph{does not} possess any spurious local minima 
(as one may easily check by varying the bias $b_{L+1}$ on the last layer of $\psi$). This effect does not contradict 
our results since, for $n=1$, 
the set $K$ is a singleton, every $y_T \in C(K) \cong \R$ can be precisely fit with a constant function,
and  condition \ref{th:spur1:item:iii} in \cref{th:spuriousminima-nonconstant} and condition \ref{th:spur2:item:ii}
in \cref{th:spuriousminima-constant} are always violated. 
Note that this highlights that the assumptions of \cref{th:spuriousminima-nonconstant,th:spuriousminima-constant} are sharp.

\section{Nonexistence of supporting half-spaces and proofs of main results}
\label{sec:4}
In this section, we prove \cref{th:spuriousminima-nonconstant,th:spuriousminima-constant}.
The point of departure for our analysis is the following theorem of Pinkus.

\begin{theorem}[{\cite[Theorem 3.1]{Pinkus1999}}]%
\label{th:pinkus}%
Let $d \in \mathbb{N}$.
Let  $\sigma\colon \R \to \R$ be a nonpolynomial continuous function.
Consider the linear hull
\begin{equation}
\label{eq:Vdef}
V := 
\span 
\left \{
x \mapsto \sigma(a^\top x + c)
\; \big |\;
a \in \R^d, c \in \R
\right \} \subset C(\R^d).
\end{equation}
Then the set $V$ is dense in $C(\R^d)$ in the topology of uniform convergence on compacta. 
\end{theorem}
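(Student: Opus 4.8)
The plan is to establish \cref{th:pinkus} in two stages: first reduce the multivariate statement to a one-dimensional density result about dilations and translates of $\sigma$, and then prove that one-dimensional result by a mollification-and-differentiation argument. Throughout, I write $V_1 := \span\{t \mapsto \sigma(\lambda t + c) \mid \lambda, c \in \R\} \subset C(\R)$ for the one-dimensional analogue of $V$ and work in the topology of uniform convergence on compact sets, so that ``closure'' always refers to this topology.

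Stage 1 (one dimension). I would first show that $V_1$ is dense in $C(\R)$. The obstruction is that $\sigma$ is merely continuous, so it cannot be differentiated directly; to circumvent this I would mollify. For $\phi \in C_c^\infty(\R)$ the convolution $\sigma_\phi := \sigma * \phi$ is smooth, and for fixed $\lambda, c$ the function $t \mapsto \sigma_\phi(\lambda t + c) = \int_\R \sigma(\lambda t + c - s)\phi(s)\dd s$ is a uniform-on-compacta limit of Riemann sums of functions $t \mapsto \sigma(\lambda t + (c-s))$, each of which lies in $V_1$; hence $\sigma_\phi(\lambda \cdot + c)$ lies in the closure of $V_1$. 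Next I would use that, since $\sigma$ is \emph{not} a polynomial, there is at least one $\phi$ for which $\sigma_\phi$ is not a polynomial either. Fixing such a $\phi$, for each $k \in \N$ the derivative $\sigma_\phi^{(k)}$ is not identically zero, so I may pick $c_k$ with $\sigma_\phi^{(k)}(c_k) \neq 0$. Finally, differentiating in the dilation parameter gives $\partial_\lambda^k \sigma_\phi(\lambda t + c_k)|_{\lambda=0} = t^k \sigma_\phi^{(k)}(c_k)$; since the $k$-th order difference quotients in $\lambda$ are finite linear combinations of elements of the closure of $V_1$ and converge uniformly on compacta to this derivative, the monomial $t^k$ lies in the closure of $V_1$ (after dividing by the nonzero constant $\sigma_\phi^{(k)}(c_k)$). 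As $k$ is arbitrary, the closure contains all polynomials, and the Weierstrass approximation theorem then yields density of $V_1$ in $C(\R)$.

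Stage 2 (reduction from $\R^d$ to $\R$ via ridge functions). Fix a compact $K \subset \R^d$, a target $f \in C(\R^d)$, and $\varepsilon > 0$. I would first approximate $f$ uniformly on $K$ by a polynomial $P$ via Stone--Weierstrass. Using polarization, every homogeneous polynomial of degree $k$ on $\R^d$ is a finite linear combination of $k$-th powers of linear forms $(a^\top x)^k$, so $P$ can be written as a finite sum $\sum_j g_j(a_j^\top x)$ of ridge functions with one-variable polynomial profiles $g_j$. Each profile $g_j$, restricted to the compact interval $\{a_j^\top x \mid x \in K\}$, is approximated uniformly by a finite sum $\sum_i \beta_{ji}\sigma(\lambda_{ji} t + c_{ji})$ by Stage 1; substituting $t = a_j^\top x$ turns this into $\sum_i \beta_{ji}\sigma((\lambda_{ji}a_j)^\top x + c_{ji}) \in V$. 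Summing over $j$ and controlling the accumulated error then produces an element of $V$ within $\varepsilon$ of $f$ uniformly on $K$, which is the asserted density.

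The genuinely delicate point is the one-dimensional core in Stage 1, and within it the mollification step: one must simultaneously verify that convolving with $\phi$ and forming difference quotients in $\lambda$ keep functions inside the closure of $V_1$ (a Riemann-sum and uniform-convergence argument on compacta) \emph{and} that some mollification $\sigma_\phi$ fails to be a polynomial. The latter is precisely where the hypothesis that $\sigma$ is nonpolynomial enters essentially: if every $\sigma_\phi$ were a polynomial, their degrees would be uniformly bounded, so the limit $\sigma_{\phi_\varepsilon} \to \sigma$ along an approximate identity would force $\sigma$ into the closed finite-dimensional space of polynomials of that bounded degree, contradicting the assumption. By comparison, the polarization reduction and the final Weierstrass approximation in Stage 2 are routine.
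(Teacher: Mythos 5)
The paper does not prove \cref{th:pinkus} at all; it imports the statement verbatim from \cite[Theorem~3.1]{Pinkus1999}, so there is no in-paper proof to compare against. What you have written is, in substance, a reconstruction of the standard proof from that cited source (the Leshno--Lin--Pinkus--Schocken argument): mollify to reduce to a smooth nonpolynomial activation, extract the monomials $t^k$ via derivatives in the dilation parameter, conclude the one-dimensional density from Weierstrass, and lift to $\R^d$ through the ridge-function decomposition of homogeneous polynomials into powers of linear forms. All of these steps are sound, and the reduction in Stage~2 and the difference-quotient/Riemann-sum bookkeeping in Stage~1 are carried out correctly.

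The one place where you assert something genuinely nontrivial without an argument is the claim that if $\sigma * \phi$ were a polynomial for \emph{every} $\phi \in C_c^\infty(\R)$, then the degrees $\deg(\sigma * \phi)$ would be uniformly bounded. This does not follow from anything you have written; your subsequent sentence uses the bounded degree to derive the contradiction, but the boundedness itself is the hard part. Without it, the limit $\sigma * \phi_\varepsilon \to \sigma$ along an approximate identity is a limit of polynomials of possibly unbounded degree, which need not be a polynomial, and the contradiction evaporates. The standard repair is a Baire category argument: fix a compact interval, regard the smooth functions supported in it as a Fr\'echet space, observe that the sets $\{\phi : \deg(\sigma * \phi) \leq k\}$ are closed linear subspaces whose union is the whole space, and conclude that one of them (hence, being a subspace with nonempty interior, all of the space) coincides with the whole space, which bounds the degree and then forces $\sigma$ itself to be a polynomial of that degree. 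With this lemma supplied, your proof is complete and matches the argument in the reference the paper relies on.
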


Note that, as $V$ contains precisely those functions that can be represented by 
one-hidden-layer neural networks of the type \eqref{eq:network} with $\sigma_1 = \sigma$,
the last theorem is nothing else than the universal approximation theorem in the arbitrary width case,
cf.\ \cite{Cybenko1989,Hornik1991}.
In other words, \cref{th:pinkus} simply expresses that, for every 
nonpolynomial $\sigma \in C(\R)$, every nonempty compact set $K \subset \R^d$,
every $y_T \in C(K)$, 
and every $\varepsilon > 0$, there exists a width 
$\tilde w_1 \in \mathbb{N}$ such that a neural network $\psi$  with the architecture in \eqref{eq:network},
depth \mbox{$L=1$}, width $w_1 \geq \tilde w_1$, and activation function $\sigma$ is able to approximate $y_T$
in \mbox{$(C(K), \|\cdot\|_{C(K)})$} up to the error $\varepsilon$. 
In what follows, we will not explore what \cref{th:pinkus} implies 
for the approximation capabilities of neural networks when the widths go to infinity
but rather which consequences the density of the space $V$ in \eqref{eq:Vdef} has 
for a given neural network with a fixed architecture. More precisely, we will 
use \cref{th:pinkus} to prove that the image $\Psi(D) \subset C(K)$ of the function
\mbox{$\Psi\colon D \to C(K)$} in \eqref{eq:Psidef}
does not admit any supporting half-spaces
when a neural network $\psi$ 
with non\-polynomial continuous activations $\sigma_i$
and arbitrary fixed dimensions $L, w_i \in \mathbb{N}$ is considered.

\begin{theorem}[nonexistence of supporting half-spaces]%
\label{th:nohalfspaces}%
Let $K \subset \R^d$, $d \in \mathbb{N}$, be a nonempty compact set
and let $\psi\colon D \times \R^d \to \R$ be a neural network 
with depth $L \in \mathbb{N}$, widths 
$w_i \in \mathbb{N}$, $i=0,...,L+1$, and continuous nonpolynomial activation functions
$\sigma_i\colon \R \to \R$, $i=1,...,L$, as in \eqref{eq:network}.
Denote with $\Psi\colon D \to C(K)$ the function in \eqref{eq:Psidef}.
Then a measure $\mu \in \MM(K)$ and a constant $c \in \R$ satisfy 
\begin{equation}
\label{eq:randomeq2535}
\left \langle \mu, z \right \rangle_{C(K)} \leq c,\qquad \forall z \in \Psi(D),
\end{equation}
if and only if $\mu = 0$ and $c \geq 0$. 
\end{theorem}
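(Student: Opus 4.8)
The plan is to prove the nontrivial implication, namely that the existence of $\mu, c$ with the stated property forces $\mu = 0$ and $c \geq 0$; the converse is immediate, since $\langle 0, z\rangle_{C(K)} = 0 \le c$ for every $z$ as soon as $c \ge 0$. Throughout I would exploit two structural features of $\Psi(D)$: it contains the zero function (choose $A_{L+1} = 0$, $b_{L+1} = 0$), and, more importantly, for fixed parameters of the first $L$ layers the output layer acts as a completely free affine map. The first feature immediately yields $0 = \langle \mu, 0\rangle_{C(K)} \le c$, i.e.\ $c \ge 0$, so the real content is the claim $\mu = 0$.

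For the core argument I would peel the network apart one layer at a time, starting from the output. Assume $\langle \mu, z\rangle_{C(K)} \le c$ for all $z \in \Psi(D)$, fix the hidden-layer parameters, and let $g\colon K \to \R^{w_{L-1}}$ denote the resulting output of the first $L-1$ layers, so that a single neuron of the last hidden layer has the form $\sigma_L(a^\top g(\cdot) + c_0)$ with weights $a \in \R^{w_{L-1}}$ and bias $c_0 \in \R$. Choosing $A_{L+1} = \lambda e_k^\top$ and $b_{L+1} = 0$ shows $\lambda\, \sigma_L(a^\top g(\cdot) + c_0) \in \Psi(D)$ for every $\lambda \in \R$; since $\lambda \langle \mu, \sigma_L(a^\top g(\cdot)+c_0)\rangle_{C(K)} \le c$ for all $\lambda$ is only possible if the pairing vanishes, I obtain $\langle \mu, \sigma_L(a^\top g(\cdot) + c_0)\rangle_{C(K)} = 0$ for every achievable first-$(L-1)$-layer output $g$ and all $a, c_0$. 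This is the statement I would want to propagate downward.

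The decisive observation --- and the step I expect to be the main obstacle, since the activations are merely continuous and nonpolynomial and need not possess any affine segment --- is that one can ``undo'' a nonlinear layer inside the pairing with $\mu$ by means of the one-dimensional universal approximation theorem. Concretely, fix a scalar function $u \in C(K)$ and set $J := u(K) \subset \R$, a compact interval. Restricting the weight vector above to a single coordinate so that $a^\top g = \alpha u$, and using linearity of the pairing, I get $\langle \mu, \varphi \circ u\rangle_{C(K)} = 0$ for every $\varphi$ in $\span\{ s \mapsto \sigma_L(\alpha s + c_0) : \alpha, c_0 \in \R\}$. \Cref{th:pinkus} in dimension one guarantees that this span is dense in $C(J)$, so I may choose $\varphi_n \to \mathrm{id}_J$ uniformly on $J$; then $\varphi_n \circ u \to u$ in $C(K)$ and continuity of $\mu$ forces $\langle \mu, u\rangle_{C(K)} = 0$. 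In short, whenever $\mu$ annihilates $\{\sigma_L(\alpha u(\cdot) + c_0)\}_{\alpha, c_0}$ it already annihilates $u$ itself; crucially this uses only linearity of the measure pairing, so no bound on the width of the peeled layer is required.

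These two ingredients combine into a downward induction on the layer index. Writing $g$ as the output of the first $\ell-1$ layers, each of its components has the form $\sigma_{\ell-1}(\tilde a^\top g' + \tilde c_0)$ with $g'$ an achievable first-$(\ell-2)$-layer output; feeding such a component into the coordinate-restriction argument above converts the statement ``$\mu$ annihilates $\sigma_\ell(a^\top g(\cdot) + c_0)$ for all achievable $g$'' into the same statement one layer down, ``$\mu$ annihilates $\sigma_{\ell-1}(\tilde a^\top g'(\cdot) + \tilde c_0)$ for all achievable $g'$.'' Iterating from $\ell = L$ down to $\ell = 1$ and using that the first-$0$-layer output is simply $g(x) = x$, I arrive at $\langle \mu, \sigma_1(a^\top x + c_0)\rangle_{C(K)} = 0$ for all $a \in \R^d$, $c_0 \in \R$. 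Invoking \Cref{th:pinkus} once more --- now in dimension $d$ --- the span of these functions is dense in $C(K)$, whence $\mu = 0$; combined with $c \ge 0$ from the first paragraph, this completes the proof.
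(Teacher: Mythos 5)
Your proposal is correct and follows essentially the same route as the paper's proof: reduce to single-neuron chains (you do this incrementally by restricting the weight vector to one coordinate, the paper does it up front via a block-structured parameter), peel off the layers from the top by using the one-dimensional case of \cref{th:pinkus} to approximate the identity uniformly on the compact image of the preceding layers, and conclude with \cref{th:pinkus} in dimension $d$. The only (immaterial) slip is calling $u(K)$ a compact interval --- it is merely a compact subset of $\R$ --- but since the density in \cref{th:pinkus} is with respect to uniform convergence on arbitrary compacta, nothing in your argument changes.
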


\begin{proof}
The implication ``$\Leftarrow$'' is trivial. 
To prove ``$\Rightarrow$'', we assume that $c \in \R$ and $\mu \in \MM(K)$
satisfying \eqref{eq:randomeq2535} are given. 
From the definition of $\Psi$, we obtain that 
$\beta \Psi(\alpha) \in \Psi(D)$ holds for all $\beta \in \R$ and all $\alpha \in D$.
If we exploit this property in \eqref{eq:randomeq2535}, then we obtain
that $c$ and $\mu$ have to satisfy $c \geq 0$ and 
\begin{equation}
\label{eq:muidentity}
\left \langle \mu, z \right \rangle_{C(K)} = 0,\qquad \forall z \in \Psi(D).
\end{equation}
It remains to prove that $\mu$ vanishes. To this end, 
we first reduce the situation to the case $w_1 = ... = w_L = 1$.
Consider  a parameter $\tilde \alpha \in D$
whose weights and biases have the form 
\begin{equation}
\label{eq:randomeq364564ge}
\begin{gathered}
\tilde A_1 := 
\begin{pmatrix}
a_1^\top
\\
0_{(w_1 - 1) \times d}
\end{pmatrix}
,
\qquad
\tilde A_i := 
\begin{pmatrix}
a_i\quad 0_{1 \times (w_{i-1} - 1)}
\\
0_{(w_i - 1)\times w_{i-1}}
\end{pmatrix}
,
~~i=2,...,L+1,
\\
\tilde b_i :=
\begin{pmatrix}
c_i
\\
0_{w_i-1}
\end{pmatrix}
,
~~
i=1,...,L+1,
\end{gathered}
\end{equation}
for some arbitrary but fixed $a_1 \in \R^{d}$, $a_i \in \R$, $i=2,...,L+1$, and $c_i \in \R$, $i=1,..., L+1$,
where $0_{p\times q} \in \R^{p \times q}$ and $0_p \in \R^p$ denote the zero matrix and zero vector 
in $\R^{p \times q}$ and $\R^p$, $p,q \in \mathbb{N}$, respectively, with the convention that these 
zero entries are ignored in the case $p=0$ or $q = 0$.
For such a parameter $\tilde \alpha$, we obtain from \eqref{eq:network} that 
\[
\psi(\tilde \alpha, x ) = \left ( \theta_{L+1}^{a_{L+1}, c_{L+1}} \circ ... 
\circ  \theta_{1}^{a_1, c_1}  \right )
\left (  x  \right ),
\qquad \forall x \in \R^d,
\]
holds with the functions $\theta_1^{a_1,c_1}\colon \R^d \to \R$, $\theta_i^{a_i, c_i}\colon \R \to \R$, $i=2,...,L+1$, given by 
\begin{equation*}
\begin{gathered}
\theta_1^{a_1, c_1}(z) :=  \sigma_1\left (a_1^\top z + c_1 \right ),
\qquad 
\theta_i^{a_i, c_i}(z) := \sigma_i\left (a_i z + c_i \right ),~\forall i=2,...,L,
\\
\theta_{L+1}^{a_{L+1}, c_{L+1}}(z) := a_{L+1} z + c_{L+1}.
\end{gathered}
\end{equation*}
In combination with \eqref{eq:muidentity} and the definition of $\Psi$, this yields that
\begin{equation}
\label{eq:randomeq63636}
\left \langle
\mu,  \theta_{L+1}^{a_{L+1}, c_{L+1}} \circ ... 
\circ  \theta_{1}^{a_1, c_1}
\right \rangle_{C(K)}
=
\int_K 
\left ( \theta_{L+1}^{a_{L+1}, c_{L+1}} \circ ... 
\circ  \theta_{1}^{a_1, c_1}  \right )
\left (  x  \right )
\dd \mu(x) = 0 
\end{equation}
holds for all $a_i, c_i$, $i=1,...,L+1$. 
Next, we use \cref{th:pinkus} to reduce the number of layers in \eqref{eq:randomeq63636}.
Suppose that $L > 1$ holds
and let $a_i,c_i$, $i \in \{1,...,L+1\} \setminus \{L\}$, be arbitrary but fixed parameters. 
From the compactness of $K$
and the continuity of the function
$K \ni x \mapsto \left ( \theta_{L-1}^{a_{L -1}, c_{L-1}} \circ ... \circ  \theta_{1}^{a_1, c_1} \right ) (x) \in \R$,
we obtain that the image $F := \left ( \theta_{L-1}^{a_{L -1}, c_{L-1}} \circ ... \circ  \theta_{1}^{a_1, c_1} \right )(K) \subset \R$
is compact, and from \cref{th:pinkus}, it follows 
that there exist numbers $n_l \in \mathbb{N}$ and $\beta_{k,l}, \gamma_{k,l},  \lambda_{k,l}\in \R$, $k=1,...,n_l$,
$l \in \mathbb{N}$, such that the sequence of continuous functions 
\[
\zeta_l\colon F \to \R,\qquad z \mapsto \sum_{k=1}^{n_l} \lambda_{k,l} \sigma_L( \beta_{k,l} z + \gamma_{k,l}), 
\]
converges uniformly on $F$ to the identity map for $l \to \infty$.
Since \eqref{eq:randomeq63636} holds for all choices of parameters, 
we further know that 
\begin{equation*}
\int_K 
a_{L+1} \lambda_{k,l}
\sigma_L
\left (
\beta_{k,l}
\left ( \theta_{L-1}^{a_{L -1}, c_{L-1}} \circ ... \circ  \theta_{1}^{a_1, c_1} \right ) (x)
+
\gamma_{k,l}
\right )
+ \frac{1}{n_l} c_{L+1}
\dd \mu(x) = 0
\end{equation*}
holds for all $k = 1,...,n_l$ and all $l \in \mathbb{N}$. Due to the linearity of the integral, 
we can add all of the above equations to obtain that 
\begin{equation*}
\int_K 
a_{L+1} \zeta_l
\left [
\left ( \theta_{L-1}^{a_{L -1}, c_{L-1}} \circ ... \circ  \theta_{1}^{a_1, c_1} \right ) (x) 
\right ]
+ c_{L+1}
\dd \mu(x) = 0, \qquad \forall l \in \mathbb{N},
\end{equation*}
holds and, after passing to the limit $l \to \infty$ by means of the dominated convergence theorem, that
\begin{equation*}
\int_K 
a_{L+1} 
\left ( \theta_{L-1}^{a_{L -1}, c_{L-1}} \circ ... \circ  \theta_{1}^{a_1, c_1} \right ) (x) 
+ c_{L+1}
\dd \mu(x) = 0. 
\end{equation*}
Since $a_i,c_i$, $i \in \{1,...,L+1\} \setminus \{L\}$, were arbitrary, this is precisely 
\eqref{eq:randomeq63636} with the $L$-th layer removed. By proceeding 
iteratively along the above lines, it follows that $\mu$ satisfies
\begin{equation*}
\int_K 
a_{L+1} 
\sigma_1(a_1^\top x + c_1)
+ c_{L+1}
\dd \mu(x) = 0
\end{equation*}
for all $a_{L+1}, c_{L+1}, c_1 \in \R$ and all $a_1 \in \R^d$. 
Again by the density in \eqref{th:pinkus} and the linearity of the integral, 
this identity can only be true if $\left \langle \mu, z \right \rangle_{C(K)} = 0$ holds 
for all $z \in C(K)$. Thus, $\mu = 0$ and the proof is complete. 
\end{proof}\pagebreak

\begin{remark}~
\begin{itemize}
\item
\Cref{th:nohalfspaces} is, in fact,
equivalent to \cref{th:pinkus}. Indeed,  the implication ``\cref{th:pinkus} $\Rightarrow$ \cref{th:nohalfspaces}''
has been proven above. To see that \cref{th:nohalfspaces} implies \cref{th:pinkus},
one can argue by contradiction. If the space $V$ in \eqref{eq:Vdef} is not dense 
in $C(\R^d)$ in the topology of uniform convergence on compacta, then there exist
a nonempty compact set $K \subset \R^d$ and a nonzero $\mu \in \MM(K)$
such that $\left \langle \mu, v\right \rangle_{C(K)} = 0$ holds for all $v \in V$,
cf.\ the proof of \cite[Proposition 3.10]{Pinkus1999}. Since \cref{th:nohalfspaces}
applies to networks with $L=1$ and $w_1 = 1$,  the variational identity 
 $\left \langle \mu, v\right \rangle_{C(K)} = 0$  for all $v \in V$
can only be true if $\mu = 0$. Hence, one arrives at a contradiction and the density 
in \cref{th:pinkus} follows. Compare also with the classical proofs of the universal approximation theorem
in \cite{Cybenko1989} and \cite{Hornik1991} in this context which prove results 
similar to \cref{th:nohalfspaces} as an intermediate step. 
In combination with the comments after \cref{th:pinkus},
this shows that the arguments that we use in the following to establish the 
existence of spurious local minima in training problems of the form \eqref{eq:P} are 
indeed closely related to the universal approximation property.

\item 
It is easy to check that the nonexistence of supporting half-spaces in \cref{th:nohalfspaces}
 implies that, for every finite training set $K = \{x_1,...,x_n\}$
 and every network $\psi$ with associated
 function $\Psi\colon D \to C(K)\cong \R^n$ satisfying the assumptions 
of \cref{th:nohalfspaces}, we have
\begin{equation}
\label{eq:improvedexpressiveness}
\sup_{y_T \in \R^n \colon |y_T| = 1} \inf_{y \in \Psi(D)} |y - y_T|^2 < 1.
\end{equation}
This shows that \cref{th:nohalfspaces} implies the ``improved expressiveness''-condition in 
\cite[Assumption 6-II)]{Christof2021} and may be used to establish an alternative proof of 
\cite[Theorem 39, Corollary 40]{Christof2021}. We remark that, for infinite $K$, a condition analogous 
to \eqref{eq:improvedexpressiveness} cannot be expected to hold for a neural network. 
In our analysis, \cref{th:nohalfspaces} serves as a substitute for 
 \eqref{eq:improvedexpressiveness} that remains 
true in the infinite-dimensional setting and for arbitrary loss functions.
\end{itemize}
\end{remark}

We are now in the position to prove \cref{th:spuriousminima-nonconstant,th:spuriousminima-constant}.
We begin by constructing the sets of local minima $E \subset D$ that appear in these theorems.
As before, we distinguish between activation functions 
with a nonconstant affine segment and activation functions with a constant segment.

\begin{lemma}
\label{lem:aff1}
Consider a nonempty compact set $K \subset \R^d$ and a neural network $\psi\colon D \times \R^d \to \R$
with depth $L \in \mathbb{N}$, widths $w_i \in \mathbb{N}$, and continuous activation functions $\sigma_i$
as in \eqref{eq:network}. Suppose that $\LL\colon C(K) \times C(K) \to \R$ and $y_T \in C(K)$
are given such that $\LL$, $y_T$, and the functions $\sigma_i$ satisfy 
the conditions \ref{th:spur1:item:ii} and \ref{th:spur1:item:v} 
in \cref{th:spuriousminima-nonconstant}. 
Then there exists a set 
$E \subset D$ of Hausdorff dimension $\dim_\HH(E) \geq m - d - 1$
such that 
all elements of $E$ are local minima of \eqref{eq:P} and such that 
\begin{equation}
\label{eq:randomeq2672ged63}
\LL(\Psi(\alpha), y_T) = \min_{(a,c) \in \R^d \times \R} \LL(z_{a,c}, y_T)
\end{equation}
holds for all $\alpha \in E$, where $z_{a,c}$ is defined by $z_{a,c}(x) := a^\top x + c$ for all $x \in \R^d$.
\end{lemma}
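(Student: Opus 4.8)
The plan is to exploit the local affine linearity of the activations to construct parameters at which the network reproduces, on all of $K$, the optimal affine function $z_{\bar a,\bar c}$ from assumption \ref{th:spur1:item:v}, and which are robust in the sense that the whole network keeps operating in the affine regime under small perturbations. Writing $\sigma_i(t)=p_it+q_i$ for $t\in I_i$ with $p_i\neq 0$ (by \ref{th:spur1:item:ii}), I first observe that if, for some parameter $\alpha$, every preactivation of layer $i$ lies in $I_i$ for all $x\in K$, then each layer acts as a genuine affine map and the network computes a global affine function $z_{a(\alpha),c(\alpha)}$ whose slope is $a(\alpha)^\top=\bigl(\prod_{i=1}^L p_i\bigr)A_{L+1}A_L\cdots A_1$ and whose coefficients depend polynomially on $\alpha$. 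Since such a $z_{a(\alpha),c(\alpha)}$ is a feasible point of the affine minimization problem in \ref{th:spur1:item:v}, its loss is bounded below by $\min_{(a,c)}\LL(z_{a,c},y_T)$; this comparison is what will turn every parameter realizing $z_{\bar a,\bar c}$ in the affine regime into a local minimum. Note that this strategy uses only \ref{th:spur1:item:ii} and \ref{th:spur1:item:v}, matching the hypotheses of the lemma.

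Concretely, I would build a base parameter $\bar\alpha$ using a single active channel, in the spirit of \eqref{eq:randomeq364564ge}: route the signal through the first neuron of each hidden layer and keep the remaining neurons at constant preactivation. Fixing interior points $t_i^*\in I_i$ with a margin $(t_i^*-\rho_i,t_i^*+\rho_i)\subset I_i$, I set the first row of $A_1$ to $\varepsilon_1\bar a^\top$ with $\varepsilon_1>0$ small enough that $\varepsilon_1\bar a^\top x$ stays within $\rho_1$ of $0$ for all $x\in K$ (possible by compactness of $K$), and choose the biases so that the active preactivations are centered at $t_i^*$ while the inactive neurons sit at $t_i^*$ as well; all preactivations of layers $1,\dots,L$ then lie strictly inside the intervals $I_i$. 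The signal travelling along the active channel is affine in $x$ with a small, nonzero slope proportional to $\bar a^\top$, and since the output layer carries no activation I can freely pick the last weight $A_{L+1}$ (entry $(1,1)$) and the scalar bias $b_{L+1}$ to rescale this slope to exactly $\bar a^\top$ and to shift the constant to exactly $\bar c$. This yields $\Psi(\bar\alpha)=z_{\bar a,\bar c}$ on $K$ while keeping layers $1,\dots,L$ in the affine regime.

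Because $K$ is compact and the finite composition in \eqref{eq:network} is jointly continuous in $(\alpha,x)$, the strict inclusion of all preactivations in the open intervals persists on an open neighborhood $U\subset D$ of $\bar\alpha$: for every $\alpha\in U$ and every $x\in K$ the preactivations remain in the $I_i$, so $\Psi(\alpha)=z_{a(\alpha),c(\alpha)}$ with $\Phi:=(a(\cdot),c(\cdot))\colon U\to\R^{d+1}$ polynomial. I then set $E:=\Phi^{-1}(\bar a,\bar c)$. For any $\alpha^*\in E$ and any $\alpha'\in U$ the comparison above gives
\[
\LL(\Psi(\alpha'),y_T)=\LL(z_{a(\alpha'),c(\alpha')},y_T)\ge\min_{(a,c)}\LL(z_{a,c},y_T)=\LL(z_{\bar a,\bar c},y_T)=\LL(\Psi(\alpha^*),y_T),
\]
so each $\alpha^*\in E$ is a local minimum of \eqref{eq:P} satisfying \eqref{eq:randomeq2672ged63}.

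It remains to bound $\dim_\HH(E)$ from below, which is the technical heart of the argument. Here I would show that $\Phi$ is a submersion at $\bar\alpha$ and invoke the implicit function theorem to conclude that $E$ contains a $C^\infty$-submanifold of dimension $m-(d+1)$ through $\bar\alpha$, whose Hausdorff dimension equals $m-d-1$ and bounds $\dim_\HH(E)$ from below. The rank count is explicit: differentiating $\Phi$ with respect to the $d+1$ coordinates formed by the first row of $A_1$ and the scalar $b_{L+1}$ yields the block-triangular Jacobian $\begin{pmatrix} R_1 I_d & 0\\ \ast & 1\end{pmatrix}$, where $R_1=\bigl(\prod_{i=1}^L p_i\bigr)\,[A_{L+1}\cdots A_2]_1$ is the first entry of the row vector multiplying $A_1$ in $a(\alpha)^\top$; the single-active-channel construction makes $[A_{L+1}\cdots A_2]_1=\prod_{i=2}^{L+1}a_i\neq 0$, and $\prod_{i=1}^L p_i\neq 0$, so the determinant $R_1^d$ is nonzero and $\Phi$ has full rank $d+1$ at $\bar\alpha$. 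The main obstacle is thus not the local-minimum property, which follows cleanly from the affine comparison, but the simultaneous bookkeeping demanded of the construction of $\bar\alpha$: keeping every hidden preactivation strictly inside its interval for all $x\in K$, hitting the exact target $z_{\bar a,\bar c}$, and arranging the product of weights so that the slope map has full rank at $\bar\alpha$.
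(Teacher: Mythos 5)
Your proposal is correct and follows essentially the same route as the paper: a single-active-channel parameter $\bar\alpha$ whose preactivations sit strictly inside the affine segments $I_i$ for all $x\in K$, persistence of this affine regime on a neighborhood by compactness and continuity, comparison with the affine best approximation from \ref{th:spur1:item:v} to get local optimality, and a count of $m-d-1$ free directions in the level set $\{\Psi(\cdot)=z_{\bar a,\bar c}\}$. The only (immaterial) difference is in the last step: the paper exploits that the output is affine in $a_1$ and $b_{L+1}$ with nonvanishing coefficient $\Lambda(\alpha')$ to write $E$ explicitly as the graph of continuous functions of the remaining parameters and cites the Hausdorff dimension of graphs, whereas you obtain the same conclusion via the implicit function theorem from the full-rank Jacobian you compute.
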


\begin{proof}
Due to \ref{th:spur1:item:ii}, we can find numbers 
$c_i \in \R$, $\varepsilon_i > 0$, $\beta_i \in \R \setminus \{0\}$, and $\gamma_i \in \R$
such that $\sigma_i(s) = \beta_i s + \gamma_i$ holds for all $s \in I_i = (c_i - \varepsilon_i, c_i + \varepsilon_i)$
and all $i=1,...,L$,
and from  \ref{th:spur1:item:v}, we obtain that there exist $\bar a \in \R^d$ and $\bar c \in \R$ satisfying 
\[
\LL(z_{a,c}, y_T) \geq \LL(z_{\bar a, \bar c}, y_T),\qquad \forall (a,c) \in \R^d \times \R. 
\]
Consider now the parameter $\bar \alpha = 
\{ (\bar A_{i}, \bar b_{i})\}_{i=1}^{L+1} \in D$ 
whose weights and biases are given by 
\begin{equation}
\label{eq:baralpha}
\begin{gathered}
\bar A_1 := 
\frac{\varepsilon_1}{2\max_{u \in K} |\bar a^\top u| + 1}
\begin{pmatrix}
\bar a^\top
\\
0_{(w_1 - 1)\times w_0}
\end{pmatrix}
\in \R^{w_1 \times w_{0}},
\\
\bar b_1 := 
c_1 1_{w_1}
\in \R^{w_1},
\\
\bar A_i := 
\frac{\varepsilon_i}{\beta_{i-1} \varepsilon_{i-1}}
\begin{pmatrix}
1\quad 0_{1 \times (w_{i - 1} - 1)}
\\
0_{(w_i - 1)\times  w_{i - 1}}
\end{pmatrix}
\in \R^{w_i \times w_{i - 1}},\qquad i=2,...,L,
\\
\bar b_i := 
c_i 1_{w_i}
 - (c_{i - 1}\beta_{i - 1} + \gamma_{i - 1}) \bar A_i 1_{w_{i - 1}}
\in \R^{w_i},\qquad i=2,...,L,
\\
\bar A_{L+1}:= 
\frac{2\max_{u \in K} |\bar a^\top u| + 1}{\beta_{L} \varepsilon_{L}}
\begin{pmatrix}
1~~0_{1 \times (w_{L} - 1)}
\end{pmatrix}
\in \R^{w_{L+1} \times w_{L}},
\\
\bar b_{L+1} := 
\bar c - (c_{L}\beta_L + \gamma_{L}) \bar A_{L+1} 1_{w_{L}}
\in \R^{w_{L+1}},
\end{gathered}
\end{equation}
where the symbols $0_{p\times q} \in \R^{p \times q}$ and $0_p \in \R^p$ again denote zero matrices and zero vectors,
respectively, with the same conventions as before and where $1_p \in \R^p$ denotes 
a vector whose entries are all one. 
Then it is easy to check by induction that, for all $x \in K$, we have
\begin{equation}
\label{eq:randomeq273535}
\begin{gathered}
\bar A_1 x + \bar b_1 
=
\frac{\varepsilon_1}{2\max_{u \in K} |\bar a^\top u| + 1}
\begin{pmatrix}
\bar a^\top x
\\
0_{w_{1} - 1}
\end{pmatrix}
+
c_1 1_{w_1}
\in \left (c_1 -  \varepsilon_1,  c_1 +  \varepsilon_1  \right )^{w_1},
\\
\bar A_{i}\left (
\varphi_{i-1}^{\bar A_{i-1}, \bar b_{i-1}} \circ ... \circ \varphi_{1}^{\bar A_{1}, \bar b_{1}}(x)
\right )
+
\bar b_i
=
\frac{\varepsilon_i}{2\max_{u \in K} |\bar a^\top u| + 1}
\begin{pmatrix}
\bar a^\top x
\\
0_{w_{i} - 1}
\end{pmatrix}
+
c_i 1_{w_i}
\\
\qquad\qquad\qquad\qquad\qquad\qquad\qquad\qquad\qquad 
\in \left (c_i - \ \varepsilon_i,  c_i + \varepsilon_i  \right )^{w_i},
\quad \forall i=2,...,L,
\end{gathered}
\end{equation}
and
\begin{equation*}
\psi(\bar \alpha, x) =
\left (\varphi_{L+1}^{\bar A_{L+1}, \bar b_{L+1}} \circ ... \circ \varphi_{1}^{\bar A_{1}, \bar b_{1}}\right )(x)
=
\bar a^\top x  + \bar c. 
\end{equation*}
The parameter $\bar \alpha$
thus satisfies $\Psi(\bar \alpha) = z_{\bar a, \bar c} \in C(K)$.
Because of the compactness of $K$, the openness of the sets $ (c_i - \varepsilon_i,  c_i + \varepsilon_i) ^{w_i}$,
$i=1,...,L$,
 the affine-linearity of $\sigma_i$ on $(c_i - \varepsilon_i, c_i + \varepsilon_i)$,
 and
 the continuity of the functions $D \times \R^d  \ni (\alpha, x) \mapsto A_1 x + b_1 \in \R^{w_1}$
and
$D \times \R^d  \ni (\alpha, x) \mapsto A_i (\varphi_{i-1}^{A_{i-1}, b_{i-1}} \circ ... \circ \varphi_{1}^{A_{1},b_{1}}(x)) + b_i  \in \R^{w_i}$,
$i=2,...,L$,
it follows that there exists $r > 0$ such that 
all of the inclusions in \eqref{eq:randomeq273535} remain valid for 
$x \in K$ and $\alpha \in B_r^D(\bar \alpha)$ and such that 
$\Psi(\alpha) \in C(K)$ is affine (i.e., of the form $z_{a,c}$) 
for all $\alpha \in B_r^D(\bar \alpha)$.
As $z_{\bar a, \bar c}$ is the global solution of the best approximation problem in \ref{th:spur1:item:v},
this shows that $\bar \alpha$ is a local minimum of \eqref{eq:P} that satisfies \eqref{eq:randomeq2672ged63}.

To show that there are many such local minima,
we require some additional notation. 
Henceforth, with $a_1,...,a_{w_1} \in \R^d$ we denote the row vectors 
in the weight matrix $A_1$ and with $e_1,...,e_{w_1} \in \R^{w_1}$ the standard basis vectors of $\R^{w_1}$.
We further introduce the abbreviation $\alpha'$ 
for the collection of all parameters of $\psi$ that belong to the degrees of freedom 
$A_{L+1},...,A_2, b_L,...,b_1$, and  $a_2,...,a_{w_1}$. 
The space of all such $\alpha'$ is denoted by $D'$. Note that this space 
has dimension $\dim(D') = m - d - 1 > 0$. We again endow $D'$ with the
Euclidean norm of the space $\R^{m - d - 1}$ that $D'$ can be transformed into by reordering 
the entries in $\alpha'$. 
As before, in what follows, a bar indicates that 
we refer to the parameter $\bar \alpha \in D$ constructed in \eqref{eq:baralpha}, 
i.e., $\bar a_k$ refers to the $k$-th row of $\bar A_1$, $\bar \alpha' \in D'$ refers 
to $(\bar A_{L+1},...,\bar A_2, \bar b_L,...,\bar b_1, \bar a_2,..., \bar a_{w_1})$, etc. 

To construct a set $E \subset D$ as in the assertion of the lemma, we first note that 
the local affine linearity of $\sigma_i$, the definition of $\bar \alpha$, our choice of $r>0$, 
and the architecture of $\psi$ imply that 
there exists a 
continuous function $\Phi\colon D' \to \R$ which satisfies 
$\Phi(\bar \alpha') + \bar b_{L+1} = \bar c$ and 
\begin{equation}
\label{eq:randomeq2635}
\psi(\alpha, x)
= \left ( \prod_{i=1}^{L} \beta_i \right ) \left (  A_{L+1}A_L ... A_1\right )x
+
\Phi(\alpha') + b_{L+1}
\end{equation}
for all $x \in K$ and all $\alpha = \{ (A_{i}, b_{i})\}_{i=1}^{L+1} \in B_r^D(\bar \alpha)$, cf.\ \eqref{eq:randomeq273535}.
Define
\begin{equation*}
\Theta\colon D' \to \R^d,
\qquad
\Theta(\alpha')
:=
\left ( \prod_{i=1}^{L} \beta_i \right ) 
\left [\left (  A_{L+1}A_L ... A_2\right )
\begin{pmatrix}
0 \\
a_2^\top
\\
\vdots
\\
a_{w_1}^\top
\end{pmatrix}
\right ]^\top,
\end{equation*}
and
\begin{equation*}
\Lambda\colon D' \to \R
,\qquad
\Lambda(\alpha')
:=
\left ( \prod_{i=1}^{L} \beta_i \right ) \left (  A_{L+1}A_L ... A_2\right )e_1.
\end{equation*}
Then \eqref{eq:randomeq2635} can be recast as 
\begin{equation}
\label{eq:randomeq263535344}
\psi(\alpha, x)
= \Theta(\alpha')^\top x + \Lambda(\alpha') a_1^\top x + \Phi(\alpha') + 
b_{L+1},\quad \forall x \in K,\quad \forall \alpha  \in B_r^D(\bar \alpha).
\end{equation}
Note that, again by the construction of $\bar \alpha$ in \eqref{eq:baralpha}, we have $\Theta(\bar \alpha') = 0$,
$\Lambda(\bar \alpha')  \neq 0$, 
and $\Lambda(\bar \alpha')\bar a_1 = \bar a$. 
In particular, due to the continuity of 
$\Lambda\colon D' \to \R$, we can find $r' > 0$ such that $\Lambda(\alpha') \neq 0$ holds 
for all $\alpha' \in B_{r'}^{D'}(\bar \alpha')$. This allows us to define
\[
g_1\colon B_{r'}^{D'}(\bar \alpha') \to \R^d,
\qquad 
g_1(\alpha') := \frac{\Lambda(\bar \alpha')}{\Lambda(\alpha')}\bar a_1 - \frac{\Theta(\alpha')}{\Lambda(\alpha')}, 
\]
and 
\[
g_2\colon B_{r'}^{D'}(\bar \alpha') \to \R,
\qquad
g_2(\alpha') := \bar c  - \Phi(\alpha'). 
\]
By construction, these functions $g_1$ and $g_2$ are continuous and satisfy 
$g_1(\bar \alpha') = \bar a_1$, $g_2(\bar \alpha') = \bar b_{L+1}$, and 
\begin{equation}
\label{eq:randomeq263535344-2}
\Theta(\alpha')^\top x + \Lambda(\alpha') g_1(\alpha')^\top x + \Phi(\alpha') + 
g_2(\alpha') = \bar a^\top x + \bar c
\end{equation}
for all $\alpha' \in B_{r'}^{D'}(\bar \alpha')$ and all $x \in \R^d$. Again due to the continuity, 
this implies that, after possibly making $r'$ smaller, we have 
\[
E := 
\left \{
\alpha \in D
\; \Big |\;
\alpha' \in B_{r'}^{D'}(\bar \alpha'),
a_1 = g_1(\alpha'),
b_{L+1} = g_2(\alpha')
\right \} \subset B_r^D(\bar \alpha).
\]
For all elements $\tilde \alpha$ of the resulting set $E$,
it now follows from 
\eqref{eq:randomeq263535344} and \eqref{eq:randomeq263535344-2}
that 
\begin{equation*}
\begin{aligned}
\psi(\tilde \alpha, x) 
&= 
 \Theta(\tilde \alpha')^\top x + \Lambda(\tilde \alpha') \tilde a_1^\top x + \Phi(\tilde \alpha') + 
\tilde b_{L+1}
\\
&=
\Theta(\tilde \alpha')^\top x + \Lambda(\tilde \alpha') g_1(\tilde \alpha')^\top x + \Phi(\tilde \alpha') + 
g_2(\tilde \alpha') = \bar a^\top x + \bar c,\qquad \forall x \in K. 
\end{aligned}
\end{equation*}
Thus, $\Psi(\tilde \alpha) = z_{\bar a, \bar c}$ and, 
due to the definitions of $r$, $\bar a$, and $\bar c$,
\[
\LL(\Psi(\tilde \alpha), y_T) = 
\LL(z_{\bar a, \bar c}, y_T) =
\min_{(a,c) \in \R^d \times \R} \LL(z_{a,c}, y_T)
=
\min_{\alpha \in B_r^D(\bar \alpha)} \LL(\Psi(\alpha), y_T) 
\]
for all $\tilde \alpha \in E \subset  B_r^D(\bar \alpha)$. 
This shows that all elements of $E$ are local minima of \eqref{eq:P}
that satisfy \eqref{eq:randomeq2672ged63}.
Since $E$ is, modulo reordering of the entries in $\alpha$, 
nothing else than the graph of a function 
defined on an open subset of  $\R^{m - d - 1}$ with values in $\R^{d + 1}$,
the fact that the Hausdorff dimension of $E$ in $D$ is at least $m - d - 1$
immediately follows from the choice of the norm on $D$ and 
classical results, see \cite[Corollary 8.2c]{DiBenedetto2016}. 
\end{proof}

\begin{lemma}
\label{lem:aff2}
Consider a nonempty compact set $K \subset \R^d$ and a neural network $\psi\colon D \times \R^d \to \R$
with depth $L \in \mathbb{N}$, widths $w_i \in \mathbb{N}$, and continuous activation functions $\sigma_i$
as in \eqref{eq:network}. Suppose that  $\LL\colon C(K) \times C(K) \to \R$ and  $y_T \in C(K)$
are given such that $\LL$, $y_T$, and the functions $\sigma_i$ satisfy 
the conditions \ref{th:spur2:item:i} and \ref{th:spur2:item:iv}
in \cref{th:spuriousminima-constant}. 
Then there exists a set $E \subset D$ of Hausdorff dimension $\dim_\HH(E) \geq m - 1$
such that 
all elements of $E$ are local minima of \eqref{eq:P} and such that 
\begin{equation}
\label{eq:randomeq26353-3}
\LL(\Psi(\alpha), y_T) = \min_{c \in \R} \LL(z_{c}, y_T)
\end{equation}
holds for all $\alpha \in E$, where $z_{c}$ is defined by $z_{c}(x) := c$ for all $x \in \R^d$.
\end{lemma}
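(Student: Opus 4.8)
The plan is to follow the proof of \cref{lem:aff1}, replacing the emulation of an affine map by the simpler emulation of a constant that exploits the saturation of the layer carrying the constant segment. By \ref{th:spur2:item:i} there exist $j \in \{1,\ldots,L\}$, a nonempty open interval $I_j$, and a constant $\kappa \in \R$ with $\sigma_j \equiv \kappa$ on $I_j$; fix some $c_j \in I_j$, and let $\bar c$ be a global minimizer of $c \mapsto \LL(z_c, y_T)$ as provided by \ref{th:spur2:item:iv}. First I would construct a parameter $\bar\alpha \in D$ that drives the $j$-th layer into its constant region for every $x \in K$: set $\bar A_j := 0$ and $\bar b_j := c_j 1_{w_j}$, with $1_{w_j} \in \R^{w_j}$ the all-ones vector as in \eqref{eq:baralpha}, set $\bar A_{L+1} := 0$ and $\bar b_{L+1} := \bar c$, and choose all remaining weights and biases arbitrarily (e.g.\ zero). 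Writing $v_i(\alpha, x)$ for the output of layer $i$ (with $v_0(\alpha, x) := x$), the pre-activation $\bar A_j v_{j-1}(\bar\alpha, x) + \bar b_j$ of layer $j$ equals $c_j 1_{w_j} \in I_j^{w_j}$ for every $x \in K$, so layer $j$ emits the constant vector $\kappa 1_{w_j}$; consequently $\psi(\bar\alpha, \cdot)$ is constant on $K$ and, by the choice of $\bar A_{L+1}$ and $\bar b_{L+1}$, equal to $\bar c$, i.e.\ $\Psi(\bar\alpha) = z_{\bar c}$.

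Next I would prove the local-minimum property. Since $K$ is compact and the map $(\alpha, x) \mapsto A_j v_{j-1}(\alpha, x) + b_j$ is continuous and takes values in the open set $I_j^{w_j}$ at $(\bar\alpha, x)$ for all $x \in K$, the same continuity-and-compactness argument as in \cref{lem:aff1} yields an $r > 0$ such that $A_j v_{j-1}(\alpha, x) + b_j \in I_j^{w_j}$ for all $x \in K$ and all $\alpha \in B_r^D(\bar\alpha)$. For every such $\alpha$, layer $j$ outputs the fixed constant vector $\kappa 1_{w_j}$, so $\Psi(\alpha)$ is a constant function $z_{c(\alpha)}$ whose value $c(\alpha) = (\varphi_{L+1}^{A_{L+1}, b_{L+1}} \circ \cdots \circ \varphi_{j+1}^{A_{j+1}, b_{j+1}})(\kappa 1_{w_j})$ depends only on the parameters of layers $j+1,\ldots,L+1$. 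Because $\bar c$ minimizes $\LL(z_c, y_T)$ over all $c \in \R$, every $\alpha \in B_r^D(\bar\alpha)$ with $c(\alpha) = \bar c$ is a local minimum of \eqref{eq:P} satisfying \eqref{eq:randomeq26353-3}; in particular, $\bar\alpha$ is.

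To obtain the $(m-1)$-dimensional set $E$, the decisive observation is that on $B_r^D(\bar\alpha)$ the output constant is affine in the last bias, namely $c(\alpha) = A_{L+1} v_L + b_{L+1}$ with $v_L := (\varphi_L^{A_L, b_L} \circ \cdots \circ \varphi_{j+1}^{A_{j+1}, b_{j+1}})(\kappa 1_{w_j})$ depending only on the parameters of layers $j+1,\ldots,L$. Let $\alpha'$ collect all parameters except the scalar $b_{L+1}$ and let $D'$ be the corresponding space, so that $\dim(D') = m - 1$; then $\alpha' \mapsto A_{L+1} v_L$ is continuous and I would set $g(\alpha') := \bar c - A_{L+1} v_L$, which is continuous with $g(\bar\alpha') = \bar b_{L+1}$. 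Since $b_{L+1}$ does not enter layers $1,\ldots,L$ and hence does not affect the saturation of layer $j$, continuity lets me pick $r' > 0$ with $(\alpha', g(\alpha')) \in B_r^D(\bar\alpha)$ for all $\alpha' \in B_{r'}^{D'}(\bar\alpha')$, and the set
\[
E := \left\{ \alpha \in D \; \Big | \; \alpha' \in B_{r'}^{D'}(\bar\alpha'),\; b_{L+1} = g(\alpha') \right\}
\]
then satisfies $c(\alpha) = A_{L+1} v_L + g(\alpha') = \bar c$, hence $\Psi(\alpha) = z_{\bar c}$, for every $\alpha \in E$. Thus all elements of $E$ are local minima of \eqref{eq:P} satisfying \eqref{eq:randomeq26353-3}, and since $E$ is, up to reordering of coordinates, the graph of the continuous function $g$ on an open subset of $\R^{m-1}$, its Hausdorff dimension in $D$ is at least $m - 1$ by the classical result used in \cref{lem:aff1}, see \cite[Corollary 8.2c]{DiBenedetto2016}.

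I expect the only genuine care to be needed in the bookkeeping of the construction of $\bar\alpha$ and in confirming that $b_{L+1}$ decouples from the saturation mechanism of layer $j$; conceptually the argument is lighter than that of \cref{lem:aff1}, because a constant target is pinned down by a single scalar equation that is solvable for $b_{L+1}$ (which enters the output affinely) rather than by the $(d+1)$-dimensional slope-and-intercept condition handled there via $g_1$ and $g_2$ -- which is precisely why the dimension bound improves from $m-d-1$ to $m-1$ and why no lower bound on the widths $w_i$ is required.
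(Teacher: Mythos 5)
Your proposal is correct and follows essentially the same route as the paper's proof: the same saturating parameter $\bar\alpha$ (the paper takes all weights zero and $\bar b_j = c_j 1_{w_j}$, $\bar b_{L+1}=\bar c$), the same compactness/continuity argument for $r>0$, and the same graph construction $b_{L+1}=g(\alpha')$ over the $(m-1)$-dimensional space $D'$ of remaining parameters. Your explicit formula $c(\alpha)=A_{L+1}v_L+b_{L+1}$ is just a concrete instance of the continuous map $\Phi$ the paper introduces, so the two arguments coincide.
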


\begin{proof}
The proof of \cref{lem:aff2} is analogous to that of \cref{lem:aff1} but simpler:
From \ref{th:spur2:item:i}, we obtain that there exist an index
$j \in \{1,...,L\}$ and numbers $c_j \in \R$, $\varepsilon_j > 0$, and $\gamma_j \in \R$
such that $\sigma_j(s) = \gamma_j$ holds for all $s \in I_j = (c_j - \varepsilon_j, c_j + \varepsilon_j)$,
and from \ref{th:spur2:item:iv}, it follows that we can find a number $\bar c \in \R$
satisfying $\LL(z_c, y_T) \geq  \LL(z_{\bar c}, y_T)$ for all $c \in \R$. 
Define $\bar \alpha = \{ (\bar A_{i}, \bar b_{i})\}_{i=1}^{L+1}$ 
to be the element of $D$ whose weights and biases are given by 
\begin{equation*}
\begin{gathered}
\bar A_i := 0 \in \R^{w_i \times w_{i - 1}},~\forall i \in \{1,...,L+1\},
\qquad 
\bar b_i := 0 \in \R^{w_i},~\forall i \in \{1,...,L\} \setminus \{j\},
\\
\bar b_j := c_j 1_{w_j} \in \R^{w_j},\quad\text{and}\quad 
\bar b_{L+1} := \bar c \in \R^{w_{L+1}},
\end{gathered}
\end{equation*}
where $1_p \in \R^p$, $p \in \mathbb{N}$, again denotes the vector whose entries are all one. 
For this $\bar \alpha$, it clearly holds $\Psi(\bar \alpha) = z_{\bar c} \in C(K)$ and 
\begin{equation*}
\bar A_{j}\left (
\varphi_{j-1}^{\bar A_{j-1}, \bar b_{j-1}} \circ ... \circ \varphi_{1}^{\bar A_{1}, \bar b_{1}}(x)
\right )
+
\bar b_j
=
c_j 1_{w_j} \in 
 \left (c_j - \ \varepsilon_j,  c_j + \varepsilon_j \right )^{w_j}
\end{equation*}
for all $x \in K$. 
Let us denote the collection of all parameters of $\psi$ belonging to the degrees of freedom
 $A_{L+1},...,A_1$ and $b_L,...,b_1$ with $\alpha'$
and the space of all such $\alpha'$ with $D'$ (again endowed with the Euclidean norm
of the associated space $\R^{m - 1}$
analogously to the proof of \cref{lem:aff1}). 
Then the compactness of $K$, the openness of $I_j$,
 the fact that $\sigma_j$ is constant on $I_j$,
 the definition of $\bar \alpha$,
the architecture of $\psi$,
and the continuity of the function 
$\smash{D \times \R^d  \ni (\alpha, x) \mapsto A_j (\varphi_{j-1}^{A_{j-1}, b_{j-1}} \circ ... \circ \varphi_{1}^{A_{1},b_{1}}(x)) + b_j \in \R^{w_j}}$ imply that there exist $r > 0$ 
and a continuous $\Phi\colon D' \to \R$ such that $\Phi(\bar \alpha') = 0$ holds and 
\begin{equation}
\label{eq:randomeq2636}
\psi(\alpha, x) = \Phi(\alpha') + b_{L+1},
\qquad \forall x \in K,\qquad \forall \alpha \in B_r^D(\bar \alpha).
\end{equation}
Define $g\colon D' \to \R$, $g(\alpha') := \bar c - \Phi(\alpha')$. 
Then $g$ is continuous, it holds $g(\bar \alpha') = \bar b_{L+1}$, and we can find a number $r' > 0$
such that
\[
E := 
\left \{
\alpha \in D
\; \Big |\;
\alpha' \in B_{r'}^{D'}(\bar \alpha'),
b_{L+1} = g(\alpha')
\right \} \subset B_r^D(\bar \alpha).
\]
For all $\tilde \alpha \in E$, it now follows from \eqref{eq:randomeq2636} and the definition of $g$ that 
\[
\psi(\tilde \alpha, x)  
= \Phi(\tilde \alpha') + \tilde b_{L+1} 
= \Phi(\tilde \alpha') + g(\tilde \alpha')
= \bar c,\qquad \forall x \in K.
\]
Due to the properties of $\bar c$ and the definition of $r$, this yields 
\[
\LL(\Psi(\tilde \alpha), y_T) = 
\LL(z_{\bar c}, y_T) =
\min_{c \in \R} \LL(z_{c}, y_T)
=
\min_{\alpha \in B_r^D(\bar \alpha)} \LL(\Psi(\alpha), y_T) 
\]
for all $\tilde \alpha \in E \subset  B_r^D(\bar \alpha)$. Thus, 
all elements of $E$ are local minima of \eqref{eq:P} satisfying \eqref{eq:randomeq26353-3}.
That $E$ has Hausdorff dimension at least $\dim(D) - 1$ follows completely 
analogously to the proof of \cref{lem:aff1}. 
\end{proof}

As already mentioned in the introduction, 
the approach
that we have used in \cref{lem:aff1,lem:aff2}
to construct the local minima in $E$ is not new. 
The idea to choose biases and weights such that 
the network inputs only come into contact with the affine linear
parts of the activation functions $\sigma_i$
can also be found in various other contributions, e.g., \cite{Christof2021,Ding2020,Goldblum2020Truth,He2020,Yun2018}. 
The main challenge in the context of \cref{th:spuriousminima-nonconstant,th:spuriousminima-constant} 
is proving that  
the local minima in \cref{lem:aff1,lem:aff2} are indeed spurious for 
generic $y_T$ and arbitrary $\sigma_i$, $L$, $w_i$, and $\LL$.
The following two lemmas show that this spuriousness can be 
established without  lengthy computations and manual constructions
by means of \cref{th:nohalfspaces}.
\begin{lemma}
\label{lem:aff3}
Suppose that $K$, $\psi$, $w_i$, $L$, $\sigma_i$, $y_T$, and $\LL$
satisfy the assumptions of \cref{th:spuriousminima-nonconstant}
and let $z_{a,c} \in C(K)$ be defined as in \cref{lem:aff1}.
Then it holds 
\begin{equation}
\label{eq:ranomd263536}
\inf_{\alpha \in D} \LL(\Psi(\alpha), y_T)
<
 \min_{(a,c) \in \R^d \times \R} \LL(z_{a,c}, y_T).
\end{equation}
\end{lemma}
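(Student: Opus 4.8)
The plan is to show that the affine best approximation $\bar v := z_{\bar a, \bar c}$ furnished by assumption \ref{th:spur1:item:v} fails to be a global minimizer of $\LL(\cdot, y_T)$ over the image $\Psi(D)$, by producing a one-parameter family inside $\Psi(D)$ that starts at $\bar v$ and along which the loss strictly decreases to first order. The first step is to observe that $\bar v \neq y_T$: the target is nonaffine by \ref{th:spur1:item:iii} whereas $\bar v$ is affine, so assumption \ref{th:spur1:item:iv} guarantees that the partial Gâteaux derivative $\mu := \partial_1 \LL(\bar v, y_T) \in \MM(K)$ is a \emph{nonzero} measure, with $\partial_1\LL(\bar v, y_T; h) = \langle \mu, h\rangle_{C(K)}$ linear in $h$.

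The heart of the argument is a construction that exploits the width condition $w_i \geq 2$ from \ref{th:spur1:item:i}. I would choose parameters whose weight matrices are block diagonal with respect to the splitting into the first channel, the second channel, and the remaining channels (the latter set to zero): the first channel reproduces $\bar v$ exactly as in \cref{lem:aff1}, routing $\bar a^\top x + \bar c$ through the affine segments $I_i$, while the second channel independently realizes an arbitrary width-one subnetwork $\phi = \theta_L^{a_L, c_L} \circ \cdots \circ \theta_1^{a_1, c_1}$ in the notation of the proof of \cref{th:nohalfspaces}. The two channels never interact inside the hidden layers and are recombined only by the output layer, where the weight $s \in \R$ attached to the second channel is free. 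Consequently $\Psi$ evaluates on $K$ to $\bar v + s\phi$, so that $\bar v + s\phi \in \Psi(D)$ for every $s \in \R$ and every width-one output $\phi$. The decisive point is that the first channel's preactivations are insensitive to the second channel, hence it stays in the affine regime and returns $\bar v$ exactly, no matter how nonlinearly $\phi$ behaves.

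To secure a descent direction I would invoke \cref{th:nohalfspaces}, or rather its proof, which reduces to the width-one case and peels off layers using \cref{th:pinkus}: since $\mu \neq 0$, it cannot annihilate all width-one outputs, for otherwise the computation in \eqref{eq:randomeq63636} would force $\mu = 0$. Thus there is a width-one output $\phi_0$ with $\langle \mu, \phi_0\rangle_{C(K)} \neq 0$. Since the output-layer weight $s$ ranges over all of $\R$, both $\bar v + s\phi_0$ and $\bar v - s\phi_0$ lie in $\Psi(D)$, and the linearity of $h \mapsto \langle \mu, h\rangle_{C(K)}$ lets me select a reachable direction $h \in \{\phi_0, -\phi_0\}$ with $\langle \mu, h\rangle_{C(K)} < 0$. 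The defining limit of the Gâteaux derivative then gives $\LL(\bar v + s h, y_T) < \LL(\bar v, y_T)$ for all sufficiently small $s > 0$, and because $\bar v + s h \in \Psi(D)$ this yields $\inf_{\alpha \in D} \LL(\Psi(\alpha), y_T) < \LL(\bar v, y_T) = \min_{(a,c) \in \R^d \times \R} \LL(z_{a,c}, y_T)$, which is exactly \eqref{eq:ranomd263536}.

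The step I expect to be the main obstacle is the decoupling construction of the second paragraph: one must verify in detail that the first channel can reproduce $\bar v$ through the affine segments (mirroring the bookkeeping of \cref{lem:aff1}) while the second channel simultaneously realizes a genuinely nonlinear width-one subnetwork whose output is merely added at the end. This is the only place where $w_i \geq 2$ is truly needed, and it is what turns the abstract nonexistence of supporting half-spaces into a concrete, admissible perturbation of $\bar v$ inside $\Psi(D)$; once it is in place, the strict decrease follows softly from \cref{th:nohalfspaces} and the first-order expansion of $\LL$.
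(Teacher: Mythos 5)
Your proposal is correct and follows essentially the same route as the paper's proof: split the network into a width-one channel that emulates $z_{\bar a,\bar c}$ through the affine segments plus an independent subnetwork recombined at the output layer (the paper keeps all remaining $w_i-1$ channels for the second block rather than just one), then use the output-layer scaling, the Gâteaux derivative of $\LL$, and \cref{th:nohalfspaces} to obtain a direction of strict descent. The only cosmetic difference is that the paper phrases the argument by contradiction, deriving the supporting-half-space condition $\langle \partial_1\LL(z_{\bar a,\bar c},y_T),\tilde\Psi(\tilde\alpha)\rangle_{C(K)}\geq 0$ and then invoking \cref{th:nohalfspaces}, whereas you extract the descent direction directly.
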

\begin{proof}
We argue by contradiction.
Suppose that the assumptions of \cref{th:spuriousminima-nonconstant} are satisfied and that 
\eqref{eq:ranomd263536} is false. Then it holds
\begin{equation}
\label{eq:randomeq3636-7656}
\LL(\Psi(\alpha), y_T)
\geq 
 \min_{(a,c) \in \R^d \times \R} \LL(z_{a,c}, y_T)
 =
 \LL(z_{\bar a, \bar c}, y_T),\qquad \forall \alpha \in D,
\end{equation}
where $(\bar a, \bar c) \in \R^d \times \R$ is the minimizer from assumption  \ref{th:spur1:item:v}  
of \cref{th:spuriousminima-nonconstant}.
To see that this inequality cannot be true, we
consider network parameters $\alpha = \{ (A_{i}, b_{i})\}_{i=1}^{L+1} \in D$ of the form 
\begin{equation}
\label{eq:randomeq3636468}
\begin{gathered}
A_1 := 
\begin{pmatrix}
\bar A_1
\\
\tilde A_1
\end{pmatrix}
\in \R^{w_1 \times w_0},
\\
A_i := 
\begin{pmatrix}
\bar A_i & 0_{1 \times (w_{i-1} - 1)}
\\
0_{(w_i - 1) \times 1}& \tilde A_i
\end{pmatrix}
\in \R^{w_{i} \times w_{i-1}},
~i=2,...,L,
\\
A_{L+1} := (\bar A_{L+1}~~ \tilde A_{L+1} ) \in \R^{w_{L+1} \times w_L},
\\
b_i :=
\begin{pmatrix}
\bar b_i
\\
\tilde b_i
\end{pmatrix}
\in \R^{w_i}
,
~
i=1,...,L,
\qquad
b_{L+1}
:=
\bar b_{L+1}  + \tilde b_{L+1} \in \R
\end{gathered}
\end{equation}
with arbitrary but fixed $\bar A_1\in \R^{1 \times d}$, $\bar A_i \in \R$, $i=2,...,L+1$, $\bar b_i \in \R$, $i=1,...,L+1$,
$\tilde A_1 \in \R^{(w_1 - 1) \times d}$, $\tilde A_i \in \R^{(w_i - 1) \times (w_{i-1}- 1)}$, $i=2,...,L$,
$\tilde A_{L+1} \in \R^{1 \times (w_L - 1)}$,  
$\tilde b_i \in \R^{w_i - 1}$, $i=1,...,L$, and $\tilde b_{L+1} \in \R$. 
Here, $0_{p\times q} \in \R^{p \times q}$ again denotes  a zero matrix. 
Note that such a structure of the network parameters is possible due to the assumption $w_i \geq 2$, $i=1,...,L$, in \ref{th:spur1:item:i}.
Using \eqref{eq:network}, it is easy to check that every $\alpha$ of the type \eqref{eq:randomeq3636468}
satisfies $\psi(\alpha, x) = \bar \psi(\bar \alpha, x) + \tilde \psi(\tilde \alpha, x)$ for all $x \in \R^d$, 
where $\bar \psi$ is a neural network as in \eqref{eq:network} with
depth $\bar L = L$, widths $\bar w_i = 1$, $i=1,...,L$,  activation functions $\sigma_i$,
and network parameter $\bar \alpha = \{ (\bar A_{i}, \bar b_{i})\}_{i=1}^{L+1}$
and where 
$\tilde\psi$ is a neural network as in \eqref{eq:network} with
depth $\tilde L = L$, widths $\tilde w_i = w_i - 1$, $i=1,...,L$, activation functions $\sigma_i$,
and network parameter $\tilde \alpha = \{ (\tilde A_{i}, \tilde b_{i})\}_{i=1}^{L+1}$. 
In combination with \eqref{eq:randomeq3636-7656}, this implies 
\begin{equation}
\label{eq:randomeq3263ge63h}
 \LL(\bar \Psi(\bar \alpha) + \tilde \Psi(\tilde \alpha), y_T)
\geq 
 \LL(z_{\bar a, \bar c}, y_T),\qquad \forall \bar \alpha \in \bar D,\qquad \forall \tilde \alpha \in \tilde D. 
\end{equation}
Here, we have used the symbols $\bar D$ and $\tilde D$ to denote the 
parameter spaces of $\bar \psi$ and $\tilde \psi$, respectively, 
and the symbols $\bar \Psi$ and $\tilde \Psi$ to denote the functions into $C(K)$
associated with $\bar \psi$ and $\tilde \psi$ defined in \eqref{eq:Psidef}. 
Note that, by exactly the same arguments as in the proof of \cref{lem:aff1},
we obtain that there exists $\bar \alpha \in \bar D$ with $\bar \Psi(\bar \alpha) = z_{\bar a, \bar c}$.
Due to \eqref{eq:randomeq3263ge63h} and the fact that $\tilde A_{L+1}$ and $\tilde b_{L+1}$ can be rescaled at will,
this yields
 \begin{equation}
 \label{eq:randomeq36736g4674hd7e3}
\LL(z_{\bar a, \bar c} + s \tilde \Psi(\tilde \alpha), y_T)
\geq 
 \LL(z_{\bar a, \bar c}, y_T),\qquad \forall \tilde \alpha \in \tilde D,\qquad \forall s \in (0, \infty). 
\end{equation}
Since $z_{\bar a, \bar c} \neq y_T$ holds by \ref{th:spur1:item:iii} and 
since $\LL$ 
 is Gâteaux differentiable in its first argument 
 with a nonzero 
derivative $\partial_1 \LL(v, y_T)$ at all points $(v, y_T) \in C(K) \times C(K)$ 
satisfying $v \neq y_T$ by \ref{th:spur1:item:iv},
we can rearrange  \eqref{eq:randomeq36736g4674hd7e3}, divide by $s>0$, and pass to the limit $s \to 0^+$
(for an arbitrary but fixed $\tilde \alpha \in \tilde D$) to obtain 
 \begin{equation}
 \label{eq:randomeq26346545}
\left \langle 
\partial_1 \LL(z_{\bar a, \bar c}, y_T), \tilde \Psi(\tilde \alpha) 
\right \rangle_{C(K)}
\geq 
0,
\qquad \forall \tilde \alpha \in \tilde D,
\end{equation}
with a measure $\partial_1 \LL(z_{\bar a, \bar c}, y_T) \in \MM(K) \setminus \{0\}$.
From \cref{th:nohalfspaces},
we know that \eqref{eq:randomeq26346545} can only be true if $\partial_1 \LL(z_{\bar a, \bar c}, y_T) = 0$ holds.
Thus, we arrive at a contradiction, \eqref{eq:randomeq3636-7656} 
cannot be correct, and the proof is complete.
\end{proof}

\begin{lemma}
\label{lem:aff4}
Suppose that $K$, $\psi$, $w_i$, $L$, $\sigma_i$, $y_T$, and $\LL$
satisfy the assumptions of \cref{th:spuriousminima-constant}
and let $z_{c} \in C(K)$ be defined as in \cref{lem:aff2}.
Then it holds 
\begin{equation}
\label{eq:randomeq26736}
\inf_{\alpha \in D} \LL(\Psi(\alpha), y_T)
<
 \min_{c \in  \R} \LL(z_{c}, y_T).
\end{equation}
\end{lemma}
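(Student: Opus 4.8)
The plan is to argue by contradiction, mirroring the structure of the proof of \cref{lem:aff3} but exploiting that a best \emph{constant} approximation is considerably easier to reproduce and perturb inside $\Psi(D)$ than an affine one. First I would suppose that \eqref{eq:randomeq26736} is false. Writing $\bar c$ for the minimizer from assumption \ref{th:spur2:item:iv} of \cref{th:spuriousminima-constant}, the negation of \eqref{eq:randomeq26736} reads
\begin{equation*}
\LL(z, y_T) \geq \LL(z_{\bar c}, y_T),\qquad \forall z \in \Psi(D),
\end{equation*}
i.e., the constant function $z_{\bar c}$ realizes the infimum of the loss over the whole image of the network.

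The decisive observation -- and the reason why, in contrast to \cref{lem:aff3}, neither the width hypothesis \ref{th:spur1:item:i} nor a splitting construction as in \eqref{eq:randomeq3636468} is required here -- is that $\Psi(D)$ is invariant under scaling and under the addition of constant functions, both of which are effected purely through the affine last layer. Concretely, for any $\alpha \in D$ and any $s \in \R$, replacing $(A_{L+1}, b_{L+1})$ by $(sA_{L+1}, sb_{L+1})$ shows $s\Psi(\alpha) \in \Psi(D)$ (as already used in the proof of \cref{th:nohalfspaces}), and afterwards adding $\bar c$ to $b_{L+1}$ shows $s\Psi(\alpha) + \bar c \in \Psi(D)$. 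Since $z_{\bar c}$ is the constant function with value $\bar c$, this yields $z_{\bar c} + s\Psi(\alpha) \in \Psi(D)$ for all $\alpha \in D$ and all $s \in \R$. Feeding these points into the negated inequality gives
\begin{equation*}
\LL(z_{\bar c} + s\Psi(\alpha), y_T) \geq \LL(z_{\bar c}, y_T),\qquad \forall \alpha \in D,\qquad \forall s \in (0, \infty).
\end{equation*}

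From this point the argument closes exactly as in \cref{lem:aff3}. Because $y_T$ is nonconstant by \ref{th:spur2:item:ii}, we have $z_{\bar c} \neq y_T$, so by \ref{th:spur2:item:iii} the partial Gâteaux derivative $\partial_1 \LL(z_{\bar c}, y_T) \in \MM(K) \setminus \{0\}$ exists and is nonzero. Rearranging the preceding display, dividing by $s > 0$, and passing to the limit $s \to 0^+$ for each fixed $\alpha$ turns it into
\begin{equation*}
\left \langle \partial_1 \LL(z_{\bar c}, y_T), \Psi(\alpha) \right \rangle_{C(K)} \geq 0,\qquad \forall \alpha \in D,
\end{equation*}
equivalently $\langle -\partial_1 \LL(z_{\bar c}, y_T), z \rangle_{C(K)} \leq 0$ for all $z \in \Psi(D)$. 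By \cref{th:nohalfspaces} this is only possible if $\partial_1 \LL(z_{\bar c}, y_T) = 0$, which contradicts its nonvanishing; hence \eqref{eq:randomeq26736} holds. I expect the only genuinely substantive ingredient to be \cref{th:nohalfspaces} itself: once the nonexistence of supporting half-spaces is in hand, the sole remaining point is the elementary closure property $z_{\bar c} + s\Psi(\alpha) \in \Psi(D)$, so the constant case is strictly simpler than \cref{lem:aff3}, the admissible perturbation directions $\Psi(\alpha)$ already sweeping out all of $\Psi(D)$ with no auxiliary sub-network decomposition.
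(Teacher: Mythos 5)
Your argument is correct and is essentially the paper's own proof: negate \eqref{eq:randomeq26736}, use the affine last layer to realize $z_{\bar c} + s\Psi(\alpha)$ inside $\Psi(D)$, divide by $s$, pass to the limit $s \to 0^+$ using assumptions \ref{th:spur2:item:ii} and \ref{th:spur2:item:iii}, and contradict \cref{th:nohalfspaces}. Your explicit remark that no width hypothesis or sub-network splitting is needed here matches the paper's observation that this case is simpler than \cref{lem:aff3}.
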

\begin{proof}
The proof of \cref{lem:aff4} is analogous to that of \cref{lem:aff3} but simpler. 
Suppose that \eqref{eq:randomeq26736} is false and that the assumptions of \cref{th:spuriousminima-constant} 
are satisfied. 
Then it holds
\begin{equation}
\label{eq:randomeq485885th8}
\LL(\Psi(\alpha), y_T)
\geq 
 \min_{c \in \R} \LL(z_{c}, y_T)
 =
 \LL(z_{\bar c}, y_T),\qquad \forall \alpha \in D,
\end{equation}
where $\bar c \in \R$ denotes the minimizer from
point \ref{th:spur2:item:iv} of \cref{th:spuriousminima-constant}.
By exploiting that the parameter  $\alpha = \{ (A_{i}, b_{i})\}_{i=1}^{L+1} \in D$
is arbitrary, by shifting the bias $b_{L+1}$ by $\bar c$, and by subsequently scaling 
$A_{L+1}$ and $b_{L+1}$ in \eqref{eq:randomeq485885th8}, we obtain that
\begin{equation}
\label{eq:randomeq26356346}
\LL(z_{\bar c} + s\Psi(\alpha), y_T)
\geq 
 \LL(z_{\bar c}, y_T),\qquad \forall \alpha \in D,\qquad \forall s \in (0, \infty).
\end{equation}
In combination with assumptions \ref{th:spur2:item:ii} and \ref{th:spur2:item:iii}
of \cref{th:spuriousminima-constant},
\eqref{eq:randomeq26356346} yields -- completely analogously to \eqref{eq:randomeq26346545} -- that 
there exists a measure $\partial_1 \LL(z_{\bar c}, y_T) \in \MM(K) \setminus \{0\}$ satisfying 
 \begin{equation*}
\left \langle 
\partial_1 \LL(z_{\bar c}, y_T),  \Psi( \alpha) 
\right \rangle_{C(K)}
\geq 
0,
\qquad \forall  \alpha \in D.
\end{equation*} 
By invoking \cref{th:nohalfspaces}, we now again arrive at a contradiction. 
Thus, \eqref{eq:randomeq485885th8} cannot be true and the assertion of the lemma follows. 
\end{proof}

To establish \cref{th:spuriousminima-nonconstant,th:spuriousminima-constant},
it suffices to combine \cref{lem:aff1,lem:aff3} and \cref{lem:aff2,lem:aff4},
respectively. This completes the proof of our main results on the existence of spurious local 
minima in training problems of the type \eqref{eq:P}.

\begin{remark}~
\begin{itemize}
\item As it is irrelevant for our analysis whether the function 
$ \alpha \mapsto \LL(\Psi(\alpha), y_T)$ appearing in (P) is used 
to train a network or to validate the generalization properties of a trained network, 
Theorems 3.1 and 3.2 also establish the existence of spurious local minima for the generalization error.

\item We expect that it is possible to extend \cref{th:spuriousminima-nonconstant,th:spuriousminima-constant}
to training problems defined on the whole of $\R^d$ provided the activation functions $\sigma_i$
and the loss function $\LL$ are sufficiently well-behaved. We omit a detailed discussion of this extension here 
since it requires nontrivial modifications of the functional analytic setting and leave this topic
for future research.

\item The distinction between the ``nonconstant affine segment''-case and the ``constant segment''-case
in \cref{th:spuriousminima-nonconstant,th:spuriousminima-constant} and 
\cref{lem:aff1,lem:aff3,lem:aff2,lem:aff4} is necessary. This can be seen, e.g., in the formulas in \eqref{eq:baralpha}
which degenerate when one of the slopes $\beta_i$ is equal to zero.

\item If a network of the type \eqref{eq:network} with an additional activation function $\sigma_{L+1}$ 
acting on the last layer is considered, then one can simply include $\sigma_{L+1}$ into the loss function 
by defining $\tilde \LL(v, y_T) := \LL(\sigma_{L+1} \circ v, y_T)$. Along these lines, 
our results can be applied 
to networks with a nonaffine last layer as well
(provided the function $\tilde \LL$ still satisfies the assumptions of \cref{th:spuriousminima-nonconstant,th:spuriousminima-constant}).
\end{itemize}
\end{remark}

\section{Further consequences of the nonexistence of supporting half-spaces}
\label{sec:5}
The aim of this section is to point out some further consequences 
of \cref{th:nohalfspaces}. Our main focus will be on the implications that 
this theorem has for the well-posedness properties of 
best approximation problems for neural networks
in function space. We begin by noting that the nonexistence of 
supporting half-spaces for the image $\Psi(D)$ in \eqref{eq:randomeq2535} 
implies that the closure of $\Psi(D)$ can only be convex if it is equal to 
the whole of $C(K)$. More precisely, we have the following result: 

\begin{corollary}
\label{cor:nonvonvex}
Let $K \subset \R^d$, $d \in \mathbb{N}$, be a nonempty and compact set and let 
$\psi\colon D \times \R^d \to \R$ be a neural network as in \eqref{eq:network}
with depth $L \in \mathbb{N}$, widths $w_i \in \mathbb{N}$, and continuous 
nonpolynomial activation functions $\sigma_i$. Suppose that $(Z, \|\cdot\|_Z)$
is a real normed space and that $\iota\colon C(K) \to Z$ is a linear and continuous map 
with a dense image. Then 
the set $\closure_Z(\iota(\Psi(D)))$ is either nonconvex or equal to $Z$.
\end{corollary}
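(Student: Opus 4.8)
The plan is to argue by contrapositive: I would assume that $\closure_Z(\iota(\Psi(D)))$ is convex and show that it must then coincide with all of $Z$. The key geometric input is the Hahn--Banach separation theorem together with \cref{th:nohalfspaces}, which tells us that $\Psi(D)$ admits no supporting half-space in $C(K)$. The bridge between $Z$ and $C(K)$ is the dense-range map $\iota$, whose adjoint $\iota^*$ will pull back functionals on $Z$ to measures on $C(K)^* = \MM(K)$.

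\emph{First I would set up the separation step.} Suppose for contradiction that $C := \closure_Z(\iota(\Psi(D)))$ is convex but $C \neq Z$. Since $C$ is a closed, convex, proper subset of the normed space $Z$, I would pick a point $z_0 \in Z \setminus C$ and apply the Hahn--Banach separation theorem to obtain a nonzero continuous linear functional $\Lambda \in Z^*$ and a constant $c \in \R$ with $\Lambda(w) \leq c$ for all $w \in C$ and $\Lambda(z_0) > c$. In particular, since $\iota(\Psi(\alpha)) \in C$ for every $\alpha \in D$, we get $\Lambda(\iota(\Psi(\alpha))) \leq c$ for all $\alpha$.

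\emph{Next I would transport this back to $C(K)$.} The composition $\Lambda \circ \iota \colon C(K) \to \R$ is linear and continuous, hence is represented by a measure $\mu := \iota^*\Lambda \in \MM(K)$, so that $\langle \mu, z\rangle_{C(K)} = \Lambda(\iota(z))$ for all $z \in C(K)$. The separating inequality then reads precisely as
\[
\langle \mu, z \rangle_{C(K)} \leq c, \qquad \forall z \in \Psi(D),
\]
which is exactly the hypothesis \eqref{eq:randomeq2535} of \cref{th:nohalfspaces}. That theorem forces $\mu = 0$, i.e.\ $\iota^*\Lambda = 0$. Since $\iota$ has dense image and $\Lambda$ is continuous, $\iota^*\Lambda = 0$ implies $\Lambda = 0$ on the closure of $\iota(C(K)) = Z$, so $\Lambda = 0$ on all of $Z$. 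This contradicts $\Lambda(z_0) > c \geq \Lambda(w)$ for $w \in C$ (which required $\Lambda \neq 0$), completing the contrapositive.

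\emph{The main obstacle} I anticipate is purely the bookkeeping of the functional-analytic identifications rather than any deep difficulty: one must verify that $\iota^* \Lambda$ is genuinely an element of $\MM(K)$ (immediate from $C(K)^* = \MM(K)$ and continuity of $\Lambda\circ\iota$) and, more delicately, that density of $\iota$ lets one conclude $\Lambda = 0$ globally from $\Lambda\circ\iota = 0$. The latter is the only place where the hypothesis that $\iota$ has dense image is used, and it is essential: without it, $\Lambda$ could be supported on the complement of $\closure_Z(\iota(C(K)))$ and the argument would fail. I would double-check that the separation theorem is applied in the correct (non-strict) form, since $C$ is closed but $z_0$ is merely outside it, which is exactly the setting where standard Hahn--Banach separation of a point from a closed convex set applies in a normed space.
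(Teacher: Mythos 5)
Your argument is correct and follows essentially the same route as the paper's proof: separate the point $z_0$ from the closed convex set via Hahn--Banach, pull the functional back through the adjoint $\iota^*$ to a measure on $K$, invoke \cref{th:nohalfspaces} to conclude $\iota^*\Lambda = 0$, and use the density of $\iota(C(K))$ to force $\Lambda = 0$, a contradiction. No gaps.
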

\begin{proof}
Assume that  $\closure_Z(\iota(\Psi(D)))$  is convex and that $\closure_Z(\iota(\Psi(D))) \neq Z$.
Then there exists $z \in Z \setminus \closure_Z(\iota(\Psi(D)))$ and it follows from the 
separation theorem for convex sets in normed spaces \cite[Corollary I-1.2]{Ekeland1976}
that we can find $\nu \in Z^* \setminus \{0\}$ and $c \in \R$ such that 
\begin{equation}
\label{eq:randomeq26354}
\left \langle \nu, \iota(\Psi(\alpha))\right \rangle_Z = \left \langle \iota^*(\nu), \Psi(\alpha)\right \rangle_{C(K)} \leq c,
\qquad \forall \alpha \in D.
\end{equation}
Here, $Z^*$ denotes the topological dual of $Z$, $\left \langle \cdot,\cdot\right\rangle_Z\colon Z^* \times Z \to \R$
denotes the dual pairing in $Z$, and $\iota^*\colon Z^* \to C(K)^* = \MM(K)$ denotes the adjoint of $\iota$
as defined in \cite[section 9]{Clason2020}. Due to \cref{th:nohalfspaces}, 
\eqref{eq:randomeq26354} is only possible if $\iota^*(\nu) = 0$, i.e., if
\[
\left \langle \iota^*(\nu), v\right \rangle_{C(K)}
=
\left \langle \nu, \iota(v)\right \rangle_Z = 0, \qquad \forall v \in C(K).
\]
As $\iota(C(K))$ is dense in $Z$, this yields $\nu = 0$ which is a contradiction. 
Thus, the set $\closure_Z(\iota(\Psi(D)))$ is either nonconvex or equal to $Z$ and the proof is complete. 
\end{proof}

We remark that, for activation functions possessing a point of differentiability with a nonzero derivative, 
a version of \cref{cor:nonvonvex} has already been proven in \cite[Lemma C.9]{Petersen2021}.
By using \cref{th:nohalfspaces} and the separation theorem, we can avoid the assumption
that such a point of differentiability exists and obtain 
\cref{cor:nonvonvex} for all nonpolynomial continuous activations $\sigma_i$.
In combination with classical results on the properties of Chebychev sets, see \cite{Vlasov1970}, the nonconvexity 
of the set $\closure_Z(\iota(\Psi(D)))$ in \cref{cor:nonvonvex} immediately implies that 
the problem of determining a best approximating element for a given $u \in Z$
from the set $\closure_Z(\iota(\Psi(D)))$ of all elements of $Z$ that can be approximated 
by points of the form $\iota(\Psi(\alpha))$ is always ill-posed in the sense of Hadamard if 
$Z$ is a strictly convex Banach space with a strictly convex dual and $\iota(\Psi(D))$ is not dense. 

\begin{corollary}
\label{cor:illposed}
Let $K$, $\psi$, $L$, $w_i$, $\sigma_i$, $(Z, \|\cdot\|_Z)$,
and $\iota$ be as in \cref{cor:nonvonvex}. Assume additionally that $(Z, \|\cdot\|_Z)$ 
is a Banach space and that  $(Z, \|\cdot\|_Z)$ 
and its topological dual $(Z^*, \|\cdot\|_{Z^*})$ are strictly convex. 
Define $\Pi$ to be the best approximation map associated with the set $\closure_Z(\iota(\Psi(D)))$, i.e., 
the set-valued projection operator
\begin{equation}
\label{eq:bestapprox}
\Pi\colon Z \rightrightarrows Z,
\qquad u \mapsto \argmin_{z \in  \closure_Z(\iota(\Psi(D)))} \left \| u - z\right \|_Z.
\end{equation}
 Then exactly one of the following is true:
\begin{enumerate}[label=\roman*)]
\item $\closure_Z(\iota(\Psi(D)))$ is equal to $Z$ and $\Pi$ is the identity map.
\item\label{illposed:item:ii} There does not exist a function $\pi\colon Z \to Z$
such that $\pi(z) \in \Pi(z)$ holds for all $z \in Z$ and such that $\pi$ is continuous in an open neighborhood of the origin. 
\end{enumerate}
\end{corollary}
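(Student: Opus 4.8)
The plan is to derive the dichotomy from the nonconvexity furnished by \cref{cor:nonvonvex} together with the classical theory of Chebyshev sets and suns, exploiting that $M := \closure_Z(\iota(\Psi(D)))$ is a cone so that the behavior of the metric projection near the origin controls its behavior everywhere. First I would record the structural facts. As already used in the proof of \cref{th:nohalfspaces}, we have $\beta\Psi(\alpha) \in \Psi(D)$ for all $\beta \in \R$ and $\alpha \in D$; since $\iota$ is linear and the closure of a cone is a cone, $M$ is a closed cone with $0 \in M$ and $\lambda M = M$ for every $\lambda \neq 0$. This makes the projection positively homogeneous, $\Pi(\lambda u) = \lambda\,\Pi(u)$ for all $\lambda > 0$ and $u \in Z$, because $d(\lambda u, M) = \lambda\,d(u, M)$ and nearest points scale accordingly. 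By \cref{cor:nonvonvex}, $M$ is either equal to $Z$ or nonconvex, and these two alternatives will drive the case distinction.

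Next I would dispatch the trivial alternative and establish mutual exclusivity. If $M = Z$, then every $u \in Z$ lies in $M$, so the minimal distance is $0$ and $u$ is its only realizer; hence $\Pi = \operatorname{id}$ and statement (i) holds. Since $\operatorname{id}$ is in particular a selection of $\Pi$ that is continuous near the origin, statement (ii) then fails; conversely, whenever (ii) holds the selection $\operatorname{id}$ is excluded, forcing $M \neq Z$ and hence the failure of (i). Combined with the fact that \cref{cor:nonvonvex} always places us in one of the two alternatives, this shows that exactly one of (i), (ii) is true once the nonconvex case is handled.

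The main work is the case where $M$ is nonconvex, so that $M \neq Z$, statement (i) fails, and it remains to prove (ii). Arguing by contradiction, I would assume that a selection $\pi\colon Z \to Z$ of $\Pi$ exists that is continuous on some ball $B_\rho^Z(0)$; note that the mere existence of such an everywhere-defined $\pi$ already guarantees $\Pi(u) \neq \emptyset$ for all $u$, i.e.\ best approximations exist throughout $Z$. The first step is to upgrade the local continuity to a global one via the cone structure: given any $u_0 \in Z$, choosing $\lambda > 0$ with $\lambda u_0 \in B_\rho^Z(0)$ and setting $\hat\pi(u) := \lambda^{-1}\pi(\lambda u)$ on a neighborhood of $u_0$ yields, by the homogeneity of $\Pi$, a selection with $\hat\pi(u) \in \Pi(u)$ that is continuous near $u_0$; thus $\Pi$ admits a continuous selection locally around every point of $Z$. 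With this in hand I would invoke the classical Chebyshev-set machinery: a continuous metric selection forces $M$ to be a sun, and, since the strict convexity of the dual $Z^*$ renders $Z$ smooth, every sun in $Z$ is convex by the results of Vlasov \cite{Vlasov1970}. The strict convexity of $Z$ itself secures the existence-and-uniqueness properties of nearest points under which this sun/Chebyshev-set theory applies. The resulting convexity of $M$ contradicts the standing assumption of this case and yields (ii).

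The hard part will be this last step, namely the clean invocation of the Chebyshev-set theory: passing from a merely local continuous selection to the global structural conclusion that $M$ is a sun, and verifying the precise hypotheses under which Vlasov's ``every sun in a smooth space is convex'' is available. This is exactly the point at which the two strict-convexity assumptions must be separated and used for their distinct purposes---smoothness of $Z$ (from rotundity of $Z^*$) for the sun-implies-convex step, and rotundity of $Z$ for the existence/uniqueness of nearest points that makes $M$ a Chebyshev set---while the cone structure and the homogeneity of $\Pi$ carry the entire burden of reducing ``near the origin'' to ``globally.'' Everything else, namely the trivial case $M = Z$, the mutual exclusivity, and the bookkeeping of the enumeration, is routine.
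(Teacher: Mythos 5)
Your proposal is correct and follows essentially the same route as the paper: the dichotomy comes from \cref{cor:nonvonvex}, the cone structure of $\closure_Z(\iota(\Psi(D)))$ reduces continuity near the origin to the general case, and the nonconvexity-versus-continuous-selection contradiction is settled by the classical Chebyshev-set/sun theory of Vlasov. The only difference is one of packaging: the paper delegates the entire last step (cone homogeneity plus the Vlasov machinery in a strictly convex space with strictly convex dual) to a single citation of \cite[Theorem 3.5]{Kainen1999}, whereas you sketch that theorem's proof by hand.
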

\begin{proof}
This immediately follows from \cref{cor:nonvonvex}, \cite[Theorem 3.5]{Kainen1999},
and the fact that the set $\closure_Z(\iota(\Psi(D)))$ is a cone.
\end{proof}

Note that there are two possible reasons for 
the nonexistence of a selection $\pi$ with the properties in point \ref{illposed:item:ii} 
of \cref{cor:illposed}. The first one is that there exists an element $u \in Z$ for which the set $\Pi(u)$ is empty, i.e., for which
the best approximation problem associated with the right-hand side of \eqref{eq:bestapprox}  
does not possess a solution.
The second one is that $\Pi(u) \neq \emptyset$ holds for all $u \in Z$ but that every selection $\pi$
taken from $\Pi$ is discontinuous at some point $u$, i.e., that
there exists an element $u \in Z$ for which the solution set of 
the best approximation problem associated with the right-hand side of \eqref{eq:bestapprox} is unstable w.r.t.\ 
small perturbations of the problem data.
In both of these cases, one of the conditions for Hadamard well-posedness is violated, 
see \cite[section 2.1]{Kabanikhin2012}, so that 
\Cref{cor:illposed} indeed implies that the problem of determining best 
approximations is ill-posed 
when  $\closure_Z(\iota(\Psi(D))) \neq Z$ holds.

To make \cref{cor:illposed} more tangible, we state its consequences for best approximation problems 
posed in reflexive Lebesgue spaces, cf.\ \cref{lem:LptrackingFunctional}. 
Such problems arise when $Z$ is equal to $L^p_\mu(K)$ for some $\mu \in \MM_+(K)$ and $p \in (1,\infty)$
and when $\iota\colon C(K) \to L^p_\mu(K)$ is the inclusion map.  
In the statement of the next corollary, we drop the inclusion map $\iota$ in the notation for the sake of readability. 

\begin{corollary}
\label{cor:LPillposed}
Suppose that $K \subset \R^d$, $d \in \mathbb{N}$, is a nonempty compact set and that
$\psi$ is a neural network as in \eqref{eq:network}
with depth $L \in \mathbb{N}$, widths $w_i \in \mathbb{N}$, and continuous 
nonpolynomial activation functions $\sigma_i$. Assume that $\mu \in \MM_+(K)$
and $p \in (1, \infty)$ are given and that the image $\Psi(D)$ of the map $\Psi\colon D \to C(K)$
in \eqref{eq:Psidef} is not dense in $L^p_\mu(K)$. Then there does not 
exist a function $\pi\colon L^p_\mu(K) \to L^p_\mu(K)$ such that 
\begin{equation}
\label{eq:Lpproj}
\pi(u) \in \argmin_{z \in  \closure_{L^p_\mu(K)}(\Psi(D))} \left \| u - z\right \|_{L^p_\mu(K)},\qquad \forall u \in L^p_\mu(K),
\end{equation}
holds and such that $\pi$ is continuous in an open neighborhood of the origin.
\end{corollary}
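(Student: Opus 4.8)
The plan is to obtain \cref{cor:LPillposed} as a direct specialization of \cref{cor:illposed} to the case $Z := L^p_\mu(K)$ with $\iota\colon C(K) \to L^p_\mu(K)$ taken to be the canonical inclusion map. First I would check that this choice of $(Z, \iota)$ meets all hypotheses required by \cref{cor:illposed}. The space $L^p_\mu(K)$ is a Banach space, and since $\mu \in \MM_+(K)$ is a finite measure (so in particular $\sigma$-finite) and $1 < p < \infty$, Clarkson's inequalities show that $L^p_\mu(K)$ is uniformly convex and hence strictly convex. The conjugate exponent $q := p/(p-1)$ again lies in $(1, \infty)$, so the dual $\smash{(L^p_\mu(K))^* \cong L^q_\mu(K)}$ is uniformly convex as well and thus also strictly convex. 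The inclusion $\iota$ is linear, and it is continuous because $\|v\|_{L^p_\mu(K)} \leq \mu(K)^{1/p} \|v\|_{C(K)}$ for all $v \in C(K)$; its image is dense by the density of $C(K)$ in $L^p_\mu(K)$ for $1 \leq p < \infty$ established in \cite[Proposition 7.9]{Folland1999}, the very fact already invoked in the proof of \cref{lem:LptrackingFunctional}.

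Having verified the hypotheses, the second step is to exploit that, for the inclusion map, $\iota(\Psi(D))$ is simply $\Psi(D)$ regarded as a subset of $L^p_\mu(K)$, so that $\closure_Z(\iota(\Psi(D)))$ coincides with $\closure_{L^p_\mu(K)}(\Psi(D))$. By the standing assumption of the corollary, $\Psi(D)$ is not dense in $L^p_\mu(K)$, whence $\closure_{L^p_\mu(K)}(\Psi(D)) \neq L^p_\mu(K)$. This rules out alternative i) in the dichotomy of \cref{cor:illposed}, so alternative ii) must hold. Statement ii) asserts precisely that there is no map $\pi\colon L^p_\mu(K) \to L^p_\mu(K)$ which is a selection of the best approximation operator for $\closure_{L^p_\mu(K)}(\Psi(D))$ and is continuous in a neighborhood of the origin, i.e., exactly the claim \eqref{eq:Lpproj} of \cref{cor:LPillposed}.

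There is no genuine obstacle here: the corollary is a straightforward reformulation of \cref{cor:illposed} in the concrete reflexive Lebesgue setting, and all the substantive work has already been carried out in \cref{th:nohalfspaces,cor:nonvonvex,cor:illposed}. The only points that require care -- and these are entirely standard -- are the identification of the dual $\smash{(L^p_\mu(K))^* \cong L^q_\mu(K)}$, which uses the $\sigma$-finiteness of the finite Radon measure $\mu$, and the (uniform, hence strict) convexity of both $L^p_\mu(K)$ and its dual, which follows from Clarkson's inequalities for $1 < p < \infty$. If anything, the mild subtlety worth flagging is ensuring that the closure taken in $L^p_\mu(K)$ in \eqref{eq:Lpproj} genuinely matches the closure $\closure_Z(\iota(\Psi(D)))$ appearing in \eqref{eq:bestapprox}; this is immediate once one notes that $\iota$ is the inclusion and was therefore suppressed in the notation, exactly as announced before the statement.
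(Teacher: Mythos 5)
Your proposal is correct and follows essentially the same route as the paper: specialize \cref{cor:illposed} to $Z = L^p_\mu(K)$ with $\iota$ the inclusion map, verify strict (indeed uniform) convexity of $L^p_\mu(K)$ and its dual together with the density of $C(K)$ in $L^p_\mu(K)$, and use the non-density hypothesis to rule out the first alternative of the dichotomy. The only cosmetic difference is that you invoke Clarkson's inequalities where the paper cites \cite{Megginson1998} for the same convexity facts.
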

\begin{proof}
From \cite[Example 1.10.2, Theorem 5.2.11]{Megginson1998},
it follows that $L^p_\mu(K)$ is uniformly convex with a uniformly convex dual,
and from \cite[Proposition 7.9]{Folland1999}, 
we obtain that the inclusion map $\iota\colon C(K) \to L^p_\mu(K)$ is linear and continuous with a dense image. 
The claim thus immediately follows from \cref{cor:illposed}. 
\end{proof}

As already mentioned in \cref{sec:1}, for neural networks with a single hidden layer,
a variant of \cref{cor:LPillposed} has also been proven in \cite[section 4]{Kainen1999}.
For related results, see also \cite{Kainen2001,Petersen2021}.
We obtain the discontinuity of $L^p_\mu(K)$-best approximation 
operators for networks of arbitrary depth here as a
consequence of  \cref{th:nohalfspaces} 
and thus, at the end of the day, as a corollary of the universal approximation theorem.
This again highlights the connections that exist between the approximation capabilities 
of neural networks and the landscape/well-posedness properties of 
the optimization problems that have to be solved in order to determine 
neural network best approximations.

We remark that,
to get an intuition for the 
geometric properties of the image $\Psi(D) \subset C(K)$ that are responsible
for the effects in \cref{th:spuriousminima-nonconstant}, \cref{th:spuriousminima-constant}, and \cref{cor:LPillposed}, 
one can indeed plot this set in simple situations. 
Consider, for example, the case $d=1$, $K=\{-1,0,2\}$, $\mu = \delta_{-1} + \delta_0 + \delta_{2}$,
$L = 1$, $w_1=1$, and $p=2$, where $\delta_x$ again denotes a Dirac measure supported at $x \in \R$.  
For these $K$ and $\mu$, we have $C(K) \cong  L^2_\mu(K) \cong \R^3$ and 
the image $\Psi(D) \subset C(K)$ of the map $\Psi$ in \eqref{eq:Psidef} can be 
identified with a subset of $\R^3$, namely,
\[
\Psi(D) =
\left \{
z \in \R^3
\;
\Big |
\;
z = \big ( \psi(\alpha,-1), \psi(\alpha,0), \psi(\alpha,2) \big )^\top \text{for some } \alpha \in D
\right \}. 
\]
Further, the best approximation problem associated with the right-hand side of \eqref{eq:Lpproj} 
simply becomes the problem of determining the set-valued Euclidean projection of 
a point $u \in \R^3$ onto $\closure_{\R^3}(\Psi(D))$, i.e., 
\begin{equation}
\label{eq:randomeq27464eh}
\text{Minimize} \quad  \left |  u - z \right |\qquad \text{w.r.t.}\quad z \in \closure_{\R^3}(\Psi(D)).
\end{equation}
This makes it possible to visualize the image $\Psi(D)$ and to interpret the $L^2_\mu(K)$-best
approximation operator associated with $\psi$ geometrically. 
The sets $\Psi(D)$ that are obtained in the above situation 
for the ReLU-activation $\sigma_{\textup{relu}}(s) := \max(0, s)$
and the SQNL-activation 
\[
\sigma_{\textup{sqnl}}(s) :=
\begin{cases}
-1 & \text{ if } s \leq -2
\\
s + s^2 /4& \text{ if } -2 < s \leq 0
\\
s - s^2 /4& \text{ if } 0 < s \leq 2
\\
1 & \text{ if } s > 2
\end{cases}
\]
can be seen in \cref{fig:graphplots}. Note that, since both of these functions
are monotonically increasing, the assumption $L = w_1 = 1$ and the architecture in \eqref{eq:network}
imply that $(0, 1, 0)^\top \not\in \closure_{\R^3}(\Psi(D))$ holds.
This shows that, for both the ReLU- and the SQNL-activation,
the resulting network falls under the scope of \cref{cor:LPillposed}.
Since  $\sigma_{\textup{relu}}$ and $\sigma_{\textup{sqnl}}$ possess constant segments, 
the training problems
\begin{equation}
\label{eq:randomeq27464eh-2}
\text{Minimize} \quad  \left |  u - \Psi(\alpha) \right |\qquad \text{w.r.t.}\quad \alpha \in D
\end{equation}
associated with 
these activation functions are moreover covered by \cref{th:spuriousminima-constant},
cf.\  \cref{lem:LptrackingFunctional}.
As \cref{fig:graphplots} shows, the sets $\Psi(D)$ obtained for 
$\sigma_{\textup{relu}}$ and $\sigma_{\textup{sqnl}}$ along the above lines are highly nonconvex and 
locally resemble two-dimensional subspaces of $\R^3$ at many points. 
Because of these properties, it is only natural 
that the resulting $L^2_\mu(K)$-best approximation operators, 
i.e., the Euclidean projections onto $ \closure_{\R^3}(\Psi(D))$, 
possess discontinuities and give rise to training problems that contain various spurious local minima. 
We remark that the examples in \cref{fig:graphplots} improve a construction 
in \cite[section 4]{Christof2021}, where a similar visualization for a 
more academic network was considered. We are able to overcome the restrictions of \cite{Christof2021} 
here due to \cref{th:pinkus,th:nohalfspaces}. Note that depicting the image $\Psi(D)$ of 
a neural network along the lines of \cref{fig:graphplots} only works well for very small architectures.
Larger networks are too expressive to be properly visualized in three dimensions.

\begin{figure}[H]
\centering
\begin{subfigure}{.5\textwidth}
  \centering
  \includegraphics[width= .99\linewidth]{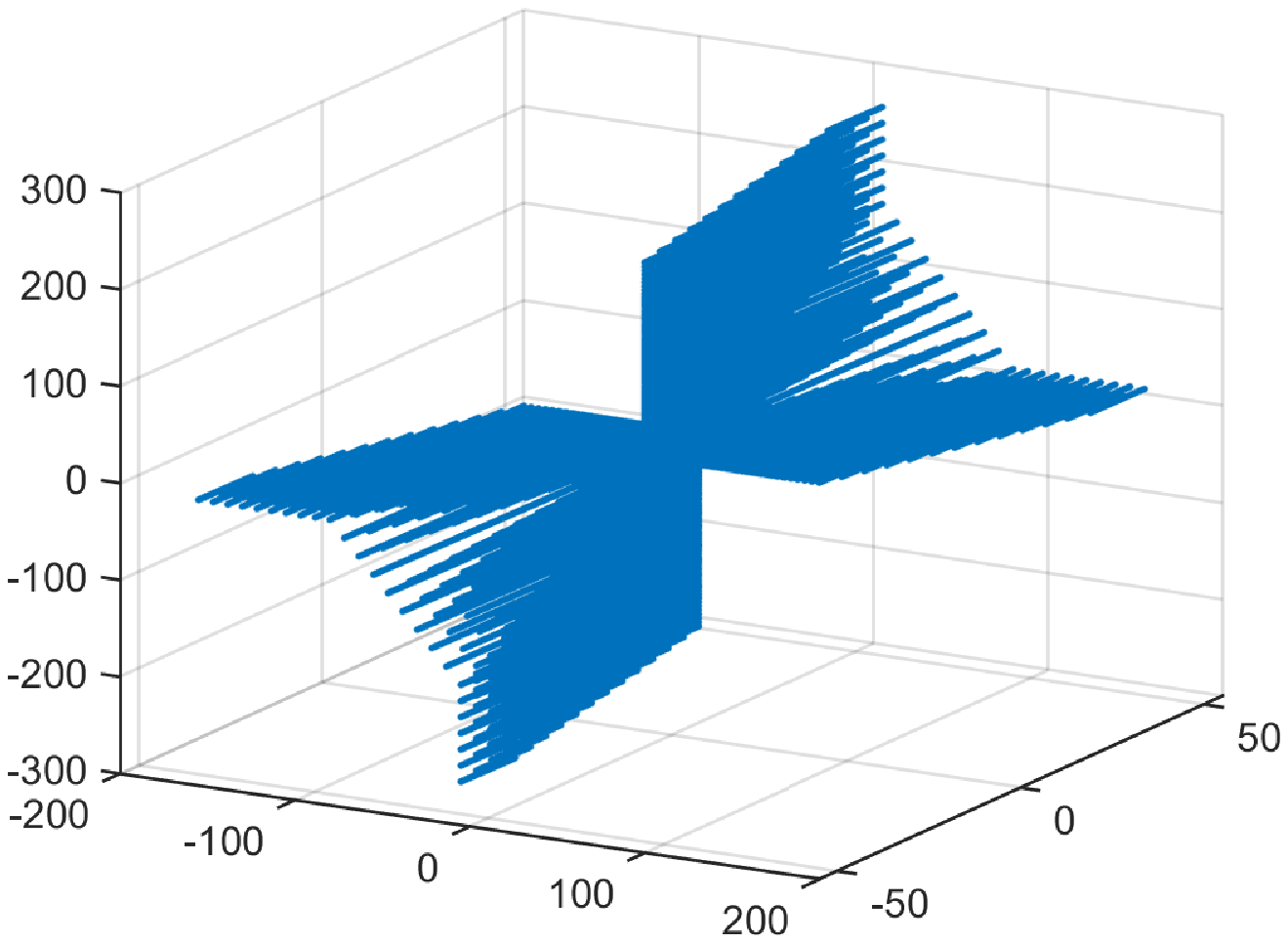}
  \caption{ $\Psi(D)$ for $\sigma_{\textup{relu}}$}
  \label{fig:sub1}
\end{subfigure}%
\begin{subfigure}{.5\textwidth}
  \centering
  \includegraphics[width=.99\linewidth]{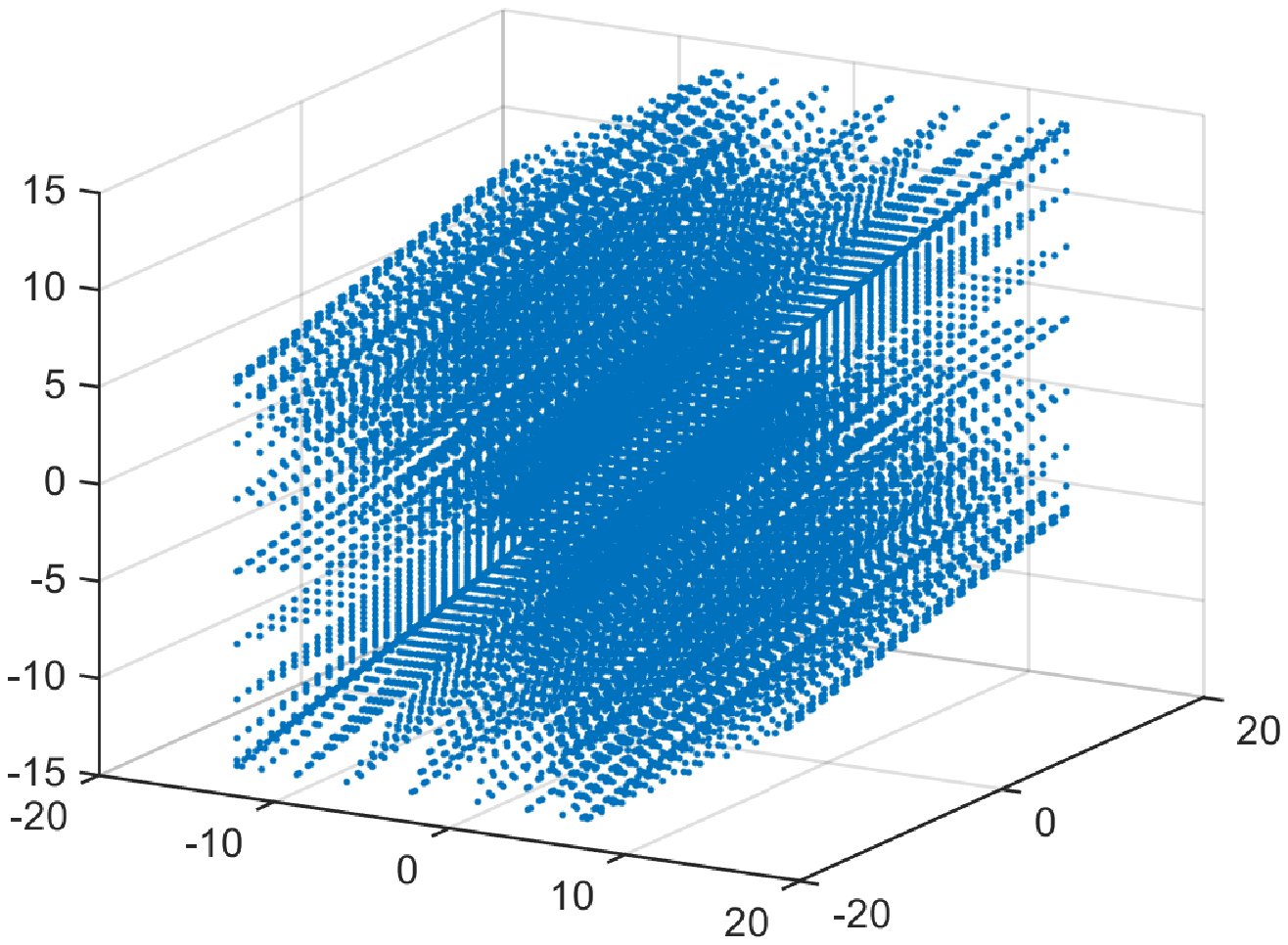}
  \caption{$\Psi(D)$ for $\sigma_{\textup{sqnl}}$}
  \label{fig:sub2}
\end{subfigure}
\vspace{-0.2cm}
\caption{Scatter plot of the image $\Psi(D)$
of the function $\Psi\colon D \to C(K) \cong  L^2_\mu(K) \cong \R^3$ in the case 
$d=1$, $K=\{-1,0,2\}$, $\mu = \delta_{-1} + \delta_0 + \delta_{2}$,
$L = 1$, and $w_1=1$ for the ReLU- and the SQNL-activation function.
For the weights $A_1, A_2 \in \R$, we used samples from the interval $[-10,10]$,
and for the biases $b_1, b_2 \in \R$, from the interval $[-5,5]$. Solving the problems
\eqref{eq:randomeq27464eh} or \eqref{eq:randomeq27464eh-2} for a given $u$ 
corresponds to calculating the set-valued Euclidean projection of $u$ onto these sets. 
}
\label{fig:graphplots}
\end{figure}
\vspace{-0.5cm}

We would like to point out that the ``space-filling'' cases 
$\closure_Z(\iota(\Psi(D))) = Z$ and 
 $\closure_{L^p_\mu(K)}(\Psi(D)) = L^p_\mu(K)$
 in \cref{cor:nonvonvex,cor:illposed,cor:LPillposed} are not as 
 pathological as one might think at first glance. 
 In fact, in many applications, neural networks are 
 trained in an ``overparameterized'' regime in which
 the number of degrees of freedom in $\psi$
exceeds the number of training samples by far and 
in which $\psi$ is able to fit arbitrary training data with zero error, see 
\cite{Zhu2019,Chen2021,Cooper2020,Li2018,Oymak2019,Safran2021}.
In the situation of \cref{lem:LptrackingFunctional}, this means that a measure $\mu$
of the form $\mu =\frac{1}{n} \sum_{k=1}^n  \delta_{x_k}$ supported on a finite set  
$K = \{x_1,...,x_n\} \subset \R^d$, $n \in \mathbb{N}$, is considered 
which satisfies $n \ll m = \dim(D)$. The absence of the ill-posedness effects 
in \cref{cor:LPillposed} is a possible explanation for the observation that 
overparameterized neural networks are far easier to train than their non-overparameterized counterparts,
cf.\ \cite{Zhu2019,Li2018,Oymak2019,Safran2021}.
We remark that, for  overparameterized finite-dimensional training problems, 
numerically determining a minimizer is also simplified by the fact that, 
in the case $m = \dim(D) \gg n$, the
Jacobian of the map $D \ni \alpha \mapsto \Psi(\alpha) \in C(K) \cong \R^n$ 
typically has full rank on a large subset of the parameter space $D$, cf.\ \cite{DingWidth2022,Nguyen2018}. Note that, 
for infinite $K$, there is no analogue to this effect since the
Gâteaux derivative of the map $D \ni \alpha \mapsto \Psi(\alpha) \in C(K)$ can never be surjective if 
$C(K)$ is infinite-dimensional. \Cref{th:spuriousminima-constant,th:spuriousminima-nonconstant}
further show that the Jacobian of the mapping $D \ni \alpha \mapsto \Psi(\alpha) \in C(K) \cong \R^n$ 
can indeed only be expected to have full rank on a large set (e.g., a.e.)
when an  overparameterized finite-dimensional training problem is considered, 
but not everywhere.

Even though there is no sensible notion of ``overparameterization''
in the infinite- dimensional setting, 
it is still possible for a neural network to satisfy the conditions 
$\closure_Z(\iota(\Psi(D))) = Z$ and 
 $\closure_{L^p_\mu(K)}(\Psi(D)) = L^p_\mu(K)$ in \cref{cor:nonvonvex,cor:illposed,cor:LPillposed} for a non-finite training set $K$. 
 In fact,  in the case $d=1$, it can be shown that the set of activation 
functions that give rise to a ``space-filling'' network is dense in $C(\R)$ 
in the topology of uniform convergence on compacta. There thus indeed exist 
\emph{many} choices of activation functions $\sigma\colon \R \to \R$ for which the density conditions 
$\closure_Z(\iota(\Psi(D))) = Z$ and 
 $\closure_{L^p_\mu(K)}(\Psi(D)) = L^p_\mu(K)$  in \cref{cor:nonvonvex,cor:illposed,cor:LPillposed}
 hold for arbitrary spaces $Z$ and arbitrary measures $\mu$. To be more precise, we have:\pagebreak
 
 \begin{lemma}
 \label{lem:dense}
 Consider a nonempty compact set  $K \subset \R$ and a neural network 
$\psi$ as in \eqref{eq:network} with depth $L \in \mathbb{N}$, widths $w_i \in \mathbb{N}$, and $d = 1$. 
Suppose that $\sigma_i = \sigma$ holds for all $i=1,...,L$ 
with a function $\sigma \in C(\R)$. 
Then, for all $\varepsilon > 0$ and all nonempty open intervals $I \subset \R$, there exists a
function $\tilde \sigma \in C(\R)$ such that $\sigma \equiv \tilde \sigma$ holds in $\R \setminus I$,
such that $|\sigma(s) - \tilde \sigma(s)| < \varepsilon$ holds for all $s \in \R$,
and such that the neural network $\tilde \psi$ obtained by replacing $\sigma$ with $\tilde \sigma$ in $\psi$
satisfies $\closure_{C(K)}(\tilde \Psi(D)) = C(K)$. 
 \end{lemma}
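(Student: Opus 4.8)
The plan is to reduce the claimed density of the image of a \emph{fixed} architecture to the much simpler task of embedding a countable dense family of target functions into the activation itself, exploiting that we are free to design $\tilde\sigma$ on $I$. Since $K\subset\R$ is a compact metric space, $C(K)$ is separable and, by the theorem of Stone--Weierstrass, the restrictions to $K$ of polynomials with rational coefficients form a countable dense subset $\{g_n\}_{n\in\N}\subset C(K)$; each $g_n$ is the restriction of an entire polynomial $\hat g_n\colon\R\to\R$. (If $K$ is a singleton the assertion is trivial, as every constant is already attained by varying $b_{L+1}$; we may thus assume $\underline K:=\min K<\max K=:\overline K$.) The first observation is that the full $L$-layer network can reproduce, \emph{exactly on $K$}, any function of the one-hidden-neuron form $x\mapsto A\,\tilde\sigma(ax+c)+b$: one routes a single neuron through the network and lets the remaining hidden layers transmit its scalar output affinely, precisely as in the emulation of a linear network carried out in the proof of \cref{lem:aff1}. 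This requires $\tilde\sigma$ to possess a nonconstant affine segment, which we will build into $I$, and (for $L=1$) is immediate. Hence it suffices to construct $\tilde\sigma$ so that the family $\{x\mapsto A\,\tilde\sigma(ax+c)+b : A,a,c,b\in\R\}$ already contains every $g_n$.

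Next I would construct $\tilde\sigma$ as follows. Fix a bounded open subinterval of $I$ and choose pairwise disjoint closed intervals $J_n=[p_n,q_n]$ inside it together with concentric closed subintervals $J_n'=[p_n',q_n']$, $p_n<p_n'<q_n'<q_n$, arranged so that $\diam(J_n)\to0$ and the $J_n$ accumulate at a single point $s_\infty$. Let $\phi_n\colon J_n'\to[\underline K,\overline K]$ be the increasing affine bijection onto the convex hull of $K$ and, on $J_n'$, set
\[
\tilde\sigma(s):=\lambda_n+\mu_n\,\hat g_n(\phi_n(s)),
\]
with $\mu_n>0$ chosen so small and $\lambda_n$ chosen so close to the value of $\sigma$ on $J_n$ that $|\tilde\sigma-\sigma|<\varepsilon$ on $J_n'$; crucially the amplitudes are let to vanish, $\mu_n\to0$ and $\lambda_n\to\sigma(s_\infty)$, as $n\to\infty$. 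On the two transition zones $[p_n,p_n']$ and $[q_n',q_n]$ I would define $\tilde\sigma$ by linear interpolation between $\sigma(p_n)$ and $\tilde\sigma(p_n')$, respectively between $\tilde\sigma(q_n')$ and $\sigma(q_n)$, so that $\tilde\sigma$ matches $\sigma$ at the endpoints $p_n,q_n$; by taking the $J_n$ short enough (uniform continuity of $\sigma$) these interpolants stay within $\varepsilon$ of $\sigma$. On a further small interval $I_0\subset I$, disjoint from every $J_n$ and from $s_\infty$, I would overwrite $\sigma$ by a nonconstant affine function on a central subinterval $I_0'$ (again gluing via transition zones), thereby installing the affine segment needed for the pass-through. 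Finally set $\tilde\sigma:=\sigma$ on $\R\setminus(\bigcup_nJ_n\cup I_0)$. This $\tilde\sigma$ is continuous --- the only nontrivial point being continuity at $s_\infty$, which follows because $|\tilde\sigma-\sigma|\to0$ along $J_n\to s_\infty$ --- satisfies $|\tilde\sigma-\sigma|<\varepsilon$ everywhere, and agrees with $\sigma$ outside $I$.

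With this $\tilde\sigma$ at hand, density is obtained by an exact representation: for fixed $n$, choose $a,c$ so that $x\mapsto ax+c$ maps $K$ into $J_n'$ via $\phi_n^{-1}$, whence $\tilde\sigma(ax+c)=\lambda_n+\mu_n g_n(x)$ for all $x\in K$; routing this through a single neuron and the affine segment $I_0'$ as above, and reading it off with the affine output layer, yields a network realization equal to $A'(\lambda_n+\mu_n g_n(x))+b'$ on $K$ for freely selectable $A',b'$. Taking $A'=1/\mu_n$ and $b'=-\lambda_n/\mu_n$ produces exactly $g_n$ on $K$, so $\{g_n\}_{n\in\N}\subset\tilde\Psi(D)$ and therefore $\closure_{C(K)}(\tilde\Psi(D))=C(K)$.

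The main difficulty I anticipate is purely in the construction of $\tilde\sigma$: one must simultaneously (i) embed on each $J_n'$ a copy of $g_n$ that is \emph{exactly} reducible to $g_n$ on $K$ by an affine change of input and output --- this is what forces the convex-hull parametrization $\phi_n$ and the use of transition zones, so that the gluing to $\sigma$ at the endpoints of $J_n$ does not constrain the free scale $\mu_n$ --- and (ii) keep $\tilde\sigma$ continuous and uniformly $\varepsilon$-close to $\sigma$, which is delicate only at the accumulation point $s_\infty$ and is resolved by letting the perturbation amplitudes $\mu_n$ decay to zero while keeping each individual $\mu_n>0$ fixed (so that the representation of $g_n$, which uses the large but finite gain $A'=1/\mu_n$, is unaffected). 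The emulation step itself is routine, being a straightforward adaptation of \cref{lem:aff1,lem:aff2}.
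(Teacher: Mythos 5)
Your proposal is correct and follows essentially the same route as the paper's proof: both embed scaled copies of the countable family of rational-coefficient polynomials into $\tilde\sigma$ on a sequence of subintervals of $I$ accumulating at a point, with amplitudes decaying to preserve continuity and the $\varepsilon$-bound, and then recover each polynomial exactly by an affine change of input and output. The only cosmetic differences are that the paper first flattens $\sigma$ to a constant on the working subinterval (avoiding your transition zones) and uses the embedded copy of $p_1(x)=x$ itself as the nonconstant affine segment through which the intermediate layers are routed.
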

 
 \begin{proof}
The lemma is an easy consequence of the separability of $(C(K), \|\cdot \|_{C(K)})$,
cf.\ \cite{Maiorov1999}.
Since we can replace $K$ by a closed bounded interval that contains $K$ to prove the claim,
since we can rescale and translate the argument $x$ of $\psi$ by means of $A_1$ and $b_1$, 
and since we can again consider parameters 
of the form \eqref{eq:randomeq364564ge}, we may assume w.l.o.g.\ that 
$K = [0,1]$ holds and that all layers of $\psi$ have width one.
Suppose that a number $\varepsilon > 0$ and a nonempty open interval $I$ are given. 
Using the continuity of $\sigma$, it is easy to check that there exists a
function $\bar \sigma \in C(\R)$ that satisfies $\sigma \equiv \bar \sigma$ in $\R \setminus I$,
$|\sigma(s) - \bar \sigma(s)| < \varepsilon/2$ for all $s \in \R$, 
and $\bar \sigma = \mathrm{const}$ in $(a, a + \eta)$
for some $a \in \R$ and $\eta > 0$ with $(a, a + \eta) \subset I$. 
Let $\{p_k\}_{k=1}^\infty \subset C([0,1])$ denote the countable collection 
of all polynomials on $[0,1]$ that have rational coefficients and that are not identical zero,
starting with $p_1(x) = x$,
and let $\phi\colon \R \to \R$ be the unique element of $C(\R)$
with the following properties:
\begin{enumerate}[label=\roman*)]
\item $\phi \equiv 0$ in $\R \setminus (a,a+\eta)$,
\item $\phi$ is affine on  $[a + \eta(1- 2^{-2k + 1}), a + \eta(1 - 2^{-2k})]$ for all $k \in \mathbb{N}$,
\item $\phi(a + \eta(1 - 2^{-2k+2}) + \eta 2^{-2k+1}x) = p_k(x) \varepsilon / (2 k \|p_k\|_{C([0,1])})$ for all $x \in [0,1]$
and all $k \in \mathbb{N}$.
\end{enumerate}
We define $\tilde \sigma := \bar \sigma + \phi$. 
Note that, for this choice of $\tilde \sigma$, 
we clearly have 
$\tilde \sigma \in C(\R)$, 
$\sigma \equiv \tilde \sigma$  in $\R \setminus I$,
and $|\sigma(s) - \tilde \sigma(s)| < \varepsilon$ for all $s \in \R$. 
It remains to show that the neural network $\tilde \psi$ associated with $\tilde \sigma$
satisfies $\closure_{C([0,1])}(\tilde \Psi(D)) = C([0,1])$.
To prove this, we observe that, due to the choice of $p_1$
and the properties of $\bar \sigma$ and $\phi$, we have 
\begin{equation}
\label{eq:randomeq3636}
\frac{2}{\varepsilon} \tilde \sigma\left (a   + \frac12 \eta x\right ) - \frac{2}{\varepsilon}\bar \sigma(a)
= p_1(x)
=
x,\qquad \forall x \in [0,1].
\end{equation}
This equation allows us to turn the functions $\varphi_i^{A_i, b_i}\colon \R \to \R$, $i=1,...,L-1$, into 
identity maps on $[0,1]$ by choosing the weights and biases appropriately and to 
consider w.l.o.g.\ the case $L=1$, cf.\ the proof of \cref{th:nohalfspaces}.
For this one-hidden-layer case, we obtain analogously to \eqref{eq:randomeq3636} that 
\[
\frac{2 k \|p_k\|_{C([0,1])}}{\varepsilon} \tilde \sigma \left (a + \eta(1 - 2^{-2k+2}) + \eta 2^{-2k+1}x\right ) 
-
\frac{2 k \|p_k\|_{C([0,1])}}{\varepsilon}\bar \sigma(a)
=
p_k(x) 
\]
holds for all $x \in [0,1]$ and all $k \in \mathbb{N}$. 
For every $k \in \mathbb{N}$, there thus exists a parameter 
$\alpha_k \in D$ satisfying $\tilde \Psi(\alpha_k) = p_k \in C([0,1])$.
Since $\{p_k\}_{k=1}^\infty$ is dense in $C([0,1])$ by the
Weierstrass approximation theorem, the identity $\closure_{C([0,1])}(\tilde \Psi(D)) = C([0,1])$
now follows immediately. This completes the proof.  
 \end{proof}

Under suitable assumptions on the depth and the widths of $\psi$, 
\cref{lem:dense} can also be extended to the case $d>1$, 
cf.\  \cite[Theorem 4]{Maiorov1999}. 
For some criteria ensuring that the image of $\Psi$ is not dense, 
see \cite[Appendix C3]{Petersen2021}.
We conclude this paper with some additional remarks on 
\cref{th:spuriousminima-nonconstant,th:spuriousminima-constant} and \cref{cor:illposed,cor:LPillposed}:

\begin{remark}~
\begin{itemize}
\item As the proofs of \cref{th:spuriousminima-nonconstant,th:spuriousminima-constant} are constructive, 
they can be used to calculate explicit examples of spurious local minima for training problems of the type \eqref{eq:P}. 
To do so in the situation of 
\cref{th:spuriousminima-nonconstant}, for example, one just has to calculate 
the slope $\bar a \in \R^d$ and the offset $\bar c \in \R$ of the affine linear 
best approximation of the target function $y_T \in C(K)$ w.r.t.\ $\LL$ and then 
plug the resulting values into the formulas in \eqref{eq:baralpha}.
The resulting network parameter 
$\bar \alpha = 
\{ (\bar A_{i}, \bar b_{i})\}_{i=1}^{L+1} \in D$  then 
yields an element of the set $E$ of spurious local minima of  \eqref{eq:P} in \cref{th:spuriousminima-nonconstant}
as desired. We remark that, along these lines, one can also easily 
construct ``bad'' choices of starting values for gradient-based training algorithms 
that cause the training process to terminate with a suboptimal point
(as any reasonable gradient-based algorithm stalls when initialized directly in or near a local minimum). 
We omit including a numerical test of this type here since such experiments have been conducted in various previous works.
We exemplarily mention the numerical investigations in 
\cite[Section 2]{Goldblum2020Truth}, 
in which a deep multilayer perceptron model is trained on CIFAR-10 by means 
of the logistic loss and in which the authors 
provoke gradient-based algorithms to fail by choosing an 
initialization near a spurious minimum of the type discussed above  
(albeit without the knowledge that this local minimum is indeed always spurious);
the numerical experiments in \cite{Safran2018} on the appearance, impact, and 
role of spurious minima in ReLU-networks; 
and the numerical tests in \cite[Sections 3, 4]{Swirszcz2016}, which are concerned with 
training problems for shallow networks 
on the XOR dataset.
\item In contrast to the proofs of \cref{th:spuriousminima-nonconstant,th:spuriousminima-constant},
the proofs of \cref{cor:illposed,cor:LPillposed} are \emph{not} constructive. This significantly 
complicates finding explicit examples of points $u \in  L^p_\mu(K)$ at which the 
function $\pi$ in \cref{cor:LPillposed} is necessarily discontinuous and 
at which the ill-posedness effects documented in \cref{cor:illposed,cor:LPillposed}  become apparent 
-- in particular as these points $u$ can be expected to occupy a comparatively small set, cf.\ \cite{Westphal1989}.
At least to our best knowledge, the construction of explicit data sets and test configurations which provably
illustrate the effects in \cref{cor:illposed,cor:LPillposed} has not been accomplished so far in the literature
(although numerical experiments on instability effects are rather common, cf.\ \cite{Cunningham2000}). 
We remark that, in \cite{Christof2021}, it has been shown that 
 training data vectors, which give rise to ill-posedness effects,
 can be calculated for
finite-dimensional squared loss training problems
by solving a certain $\max$-$\min$-optimization problem, see 
\cite[Lemma 12 and proof of Theorem 15]{Christof2021}. This implicit characterization 
might provide a way for determining ``worst case''-training data sets 
for problems of the type \eqref{eq:P}. We leave this topic for future research.
\end{itemize}
\end{remark}

\bibliographystyle{siamplain}
\bibliography{references}

\end{document}